\newtheorem{thm}{Theorem}
\newtheorem{cor}[thm]{Corollary}
\newtheorem{lem}[thm]{Lemma}
\newtheorem{defn}[thm]{Definition}
\newtheorem{rem}[thm]{Remark}
\newtheorem{example}[thm]{Example}
\newcommand\clip[1]{\accentset{\frown}{#1}}
\newcommand{\norm}[1]{\left\Vert#1\right\Vert}
\newcommand{\indi}[1]{\bm{1}{\left\{#1\right\}}}
\newcommand{\R}{\mathbb{R}}
\newcommand{\eps}{\varepsilon}
\newcommand{\argmin}{\operatornamewithlimits{argmin}}
\newcommand{\asarrow}[0]{\stackrel{\mbox{\scriptsize\rm a.s.}}
{\rightarrow}}
\newcommand{\parrow}[0]{\stackrel{\mbox{\scriptsize\rm P}}
{\rightarrow}}
\newcommand{\Y}[0]{\mathcal{Y}}
\newcommand{\T}[0]{\mathcal{T}}
\newcommand{\Z}[0]{\mathcal{Z}}
\newcommand{\Risk}[0]{\mathcal{R}}
\newcommand{\RLP}[0]{\mathcal{R}_{L,P}}
\newcommand{\RLPS}[0]{\mathcal{R}_{L,P}^*}
\newcommand{\RLD}[0]{\mathcal{R}_{L,D}}
\newcommand{\RLND}[0]{\mathcal{R}_{L^n_D,D}}
\newcommand{\RLNG}[0]{\mathcal{R}_{L_G,D}}
\newcommand{\RLG}[0]{\mathcal{R}_{L_G,P}}
\newcommand{\sn}[0]{\sqrt{n}}
\newcommand{\bbeta}[0]{{\bm{\beta}}}
\newcommand{\Ghat}[0]{\hat{G}}
\newcommand{\Lc}{\mathfrak{L}^c}
\newcommand{\nhs}[1]{\|#1\|_H^2}
\newcommand{\sign}{\mathrm{sign}}
\newcommand{\ep}{\mathbb{P}}
\newcommand{\fc}{f^c_{D,\lambda}}
\newcommand{\fp}{f_{P,\lambda}}
\newcommand{\fps}{f_{P,L}^*}
\newcommand{\hatG}{\hat{G}_n}
\newcommand{\clipfc}{\clip{f}^c_{D,\lambda}}
\newcommand{\tV}{\tilde{V}}
\def\spacingset#1{\renewcommand{\baselinestretch}%
{#1}\small\normalsize} \spacingset{1.6}
\begin{document}

\begin{frontmatter}


\title{Support Vector Regression for Right Censored Data\protect\thanksref{T1}}
\runtitle{SVM for Right Censored Data}
\thankstext{T1}{The authors are grateful to anonymous associate editor for the helpful suggestions and comments. The authors thank Danyu Lin for many helpful discussions and suggestions. The first author was funded in part by a Gillings Innovation Laboratory (GIL) award at the UNC Gillings School of Global Public Health. The second author was funded in part by NCI grant CA142538.}

\begin{aug}
\author{\fnms{Yair} \snm{Goldberg}\ead[label=e1]{ygoldberg@stat.haifa.ac.il}}
and \author{\fnms{Michael R.}
\snm{Kosorok}\ead[label=e2]{kosorok@unc.edu}}

\affiliation{University of Haifa and The University of North Carolina at Chapel Hill}

\address{Yair Goldberg\\Department of Statistics
\\The University of Haifa\\ Mount Carmel, Haifa 31905, Israel\\ \printead{e1}}

\address{Michael R. Kosorok\\Department of Biostatistics
\\The University of North Carolina at Chapel
Hill\\Chapel Hill, NC 27599, U.S.A.\\ \printead{e2}}

\runauthor{Goldberg and Kosorok}
\end{aug}
\begin{abstract}
We develop a unified approach for classification and regression support vector machines for data subject to right censoring.
We provide finite sample bounds on the generalization error of the algorithm, prove risk consistency for a wide class of probability measures, and study the associated learning rates. We apply the general methodology to estimation of the (truncated) mean, median, quantiles, and for classification problems. We present a simulation study that demonstrates the performance of the proposed approach.
\end{abstract}

\begin{keyword}[class=AMS]
\kwd[Primary ]{62G05}
\kwd[; secondary ]{ 62G20}\kwd{62N02}
\end{keyword}

\begin{keyword}
\kwd{support vector regression}
\kwd{right censored data}
\kwd{generalization error}
\kwd{universal consistency}
\end{keyword}
\end{frontmatter}

\section{Introduction}\label{sec:intro}
In many medical studies, estimating the failure time
distribution function, or quantities that depend on this distribution, as a function of patient demographic and prognostic variables, is of central importance for risk assessment and health planing. Frequently, such data is subject to right censoring. The goal of this paper is to develop tools for analyzing such data using machine learning techniques.

Traditional approaches to right censored failure time analysis include using parametric models, such as the Weibull distribution, and semiparametric models such as proportional hazard models
\citep[see][for both]{Lawless03}. Even when less stringent models---such as nonparametric estimation---are
used, it is typically assumed that the distribution
function is smooth in both time and covariates
\citep{Dabrowska87,Gonzalez94}. These assumptions seem
restrictive, especially when considering today's high-dimensional data settings.

In this paper, we propose a support vector machine (SVM)
learning method for right censored data. The choice of SVM is motivated by the fact that SVM learning methods are easy-to-compute techniques that enable estimation under weak or no assumptions on the
distribution \citep{SVR}. SVM learning methods, which we review in detail in Section~\ref{sec:notation}, are a collection of algorithms that attempt to minimize the risk with respect to some loss function. An SVM
learning method typically minimizes a regularized version
of the empirical risk over some reproducing kernel Hilbert
space (RKHS). The resulting minimizer is referred to as the SVM decision function. The SVM learning method is the mapping that assigns to each data set its corresponding SVM decision function.

We adapt the SVM framework to
right censored data as follows. First, we represent the distribution's quantity of interest as a Bayes decision function, i.e., a function
that minimizes the risk with respect to a loss function.
We then construct a data-dependent version of this loss
function using inverse-probability-of-censoring weighting
\citep{Robins94}. We then minimize a regularized empirical risk
with respect to this data-dependent loss function to obtain
an SVM decision function for censored data. Finally, we
define the SVM learning method for censored data as the mapping
that assigns for every censored data set its corresponding SVM decision function.

Note that unlike the standard SVM decision function, the proposed censored SVM decision function is obtained as the minimizer of a
data-dependent loss function. In other words, for each data set, a different minimization loss function is defined.
Moreover, minimizing the empirical risk no longer consists of minimizing a sum of i.i.d. observations.
Consequently, different techniques are needed to study the theoretical properties of the censored SVM learning method.

We prove a number of theoretical results for the proposed
censored SVM learning method. We first prove that the
censored SVM decision function is measurable and unique. We
then show that the censored SVM learning method is a
measurable learning method. We provide a probabilistic
finite-sample bound on the difference in risk between the
learned censored SVM decision function and the Bayes risk. We further show that the SVM learning
method is consistent for every probability measure for
which the censoring is independent of the failure time
given the covariates, and the probability that no censoring
occurs is positive given the covariates. Finally, we
compute learning rates for the censored SVM learning
method. We also provide a simulation study that demonstrates the
performance of the proposed censored SVM learning method. Our results are obtained under some conditions
on the approximation RKHS and the loss function, which can
be easily verified. We also assume that the estimation of
censoring probability at the observed points is consistent.

We note that a number of other learning algorithms have been suggested for survival data. \citet{NN98} and \citet{Ripley} used neural networks. \citet{Segal1988}, \citet{Hothorn2004Bagging}, \citet{Ishwaran2008Random}, and \citet{Zhu2011Recursively}, among others, suggested versions
of splitting trees and random forests for survival data. \citet{Johnson04}, \citet{SVR07}, \citet{SVQR09},
and \citet{Zhao10}, among others, suggested versions
of SVM different from the proposed censored SVM. The theoretical properties of most of these algorithms have never been studied. Exceptions include the consistency proof of \citet{Ishwaran2010} for random survival trees, which requires the assumption that the feature space is discrete and finite. In the context of multistage decision problems, \citet{GK_CQL11} proposed a Q-learning algorithm for right censored data for which a theoretical justification is given, under the assumption that the censoring is independent of both failure time and covariates. However, both of these theoretically justified algorithms are not SVM learning methods. Therefore, we believe that the proposed censored SVM and the accompanying theoretical evaluation given in this paper represent a significant innovation in developing methodology for learning in survival data.

Although the proposed censored SVM approach enables the
application of the full SVM framework to right censored data,
one potential drawback is the need to
estimate the censoring probability at observed failure times.
This estimation is required in order to use
inverse-probability-of-censoring weighting for constructing
the data-dependent loss function. We remark that in many
applications it is reasonable to assume that the censoring
mechanism is simpler than the failure-time distribution; in these cases, estimation of the censoring distribution is typically easier than estimation of the failure distribution.
For example, the censoring may depend only on a subset of
the covariates, or may be independent of the covariates;
in the latter case, an efficient estimator exists.
Moreover, when the only source of censoring is administrative, in other words, when the data is censored because the study ends at a prespecified time, the censoring distribution is often known to be independent of the covariates. Fortunately, the results presented in
this paper hold for any censoring estimation
technique. We present results for both correctly specified
and misspecified censoring models. We also discuss in detail the special cases of the Kaplan-Meier and the Cox
model estimators \citep{FH}.

While the main contribution of this paper is the proposed censored SVM learning method and the study of its properties, an additional contribution is the development of
a general machine learning framework for right censored data. The
principles and definitions that we discuss in the context of
right censored data, such as learning methods,
measurability, consistency, and learning rates, are
independent of the proposed SVM learning method. This framework can be adapted to other
learning methods for right censored data, as well as for
learning methods for other missing data mechanisms.

The paper is organized as follows. In
Section~\ref{sec:notation} we review right-censored data
and SVM learning methods. In Section~\ref{sec:no_censoring}
we briefly discuss the use of SVM for right-censored data when no
censoring is present. Section~\ref{sec:censoring} discusses
the difficulties that arise when applying SVM to right
censored data and presents the proposed censored SVM learning
method. Section~\ref{sec:main} contains the main theoretical
results, including finite sample bounds and consistency.
Simulations appear in Section~\ref{sec:simulation}.
Concluding remarks appear in Section~\ref{sec:summary}. The lengthier
key proofs are provided in the Appendix. Finally, the Matlab code for both the algorithm and the simulations can be found in \ref{sec:suppA}.

\section{Preliminaries}\label{sec:notation}
\defcitealias{FH}{FH91}
\defcitealias{SVR}{SC08}
In this section, we establish the notation used throughout the paper.
We begin by describing the data setup (Section~\ref{subsec:right_censored_data}). We then discuss
loss functions (Section~\ref{subsec:loss}). Finally we discuss SVM learning methods (Section~\ref{subsec:svm}). The notation
for right censored data generally follows \citet{FH} \citepalias[hereafter
abbreviated][]{FH}. For the loss function and the SVM definitions, we follow
\citet{SVR} \citepalias[hereafter abbreviated][]{SVR}.

\subsection{Data Setup}\label{subsec:right_censored_data}
We assume the data consist of $n$ independent and identically-distributed random triplets
$D=\{(Z_1,U_1,\delta_1),\ldots,(Z_n,U_n,\delta_n)\}$. The
random vector $Z$ is a covariate vector that takes its
values in a set $\Z\subset\R^d$. The random variable $U$ is the observed time defined by $U=T\wedge C$, where $T\geq 0$ is the failure time, $C$ is the censoring time, and where $a\wedge
b=\min(a,b)$. The indicator $\delta=\indi{T\leq C}$ is the failure indicator, where $\indi{A}$ is $1$ if $A$ is true and $0$ otherwise, i.e., $\delta=1$ whenever a failure time
is observed.

Let $S(t|Z)=P(T>t|Z)$ be the survival functions of $T$, and let
$G(t|Z)=P(C> t|Z)$ be the survival function of $C$. We make the following assumptions:
\begin{enumerate}
\renewcommand{\labelenumi}{(A\arabic{enumi})}
  \item $C$ takes its values in the segment $[0,\tau]$ for some finite $\tau>0$,
      and $\inf_{z\in\Z} G(\tau-|z)\geq
      2K>0$.\label{as:positiveRisk}
  \item $C$ is independent of $T$, given
      $Z$.\label{as:T_independent_C}
\end{enumerate}
The first assumption assures that there is a positive probability of censoring over the observation time
range ($[0,\tau]$). Note that the existence of such a $\tau$  is typical since most studies have
a finite time period of observation. In the above, we also define $F(t-)$ to be the left-hand limit of
a right continuous function $F$ with left-hand limits. The second assumption is standard in survival analysis and ensures
that the joint nonparametric distribution of the survival and censoring times, given the covariates, is
identifiable.

We assume that the censoring mechanism can be described by
some simple model. Below, we consider two possible
examples, although the main results do not require any
specific model. First, we need some notation. For every
$t\in[0,\tau]$, define $\mathbf{N}(t)=\indi{U\leq t,\delta=0}$ and
$\mathbf{Y}(t)=\indi{U>t}+\indi{U=t,\delta=0}$. Note that
since we are interested in the survival function of the
censoring variable, $\mathbf{N}(t)$ is the counting process for the
censoring, and not for the failure events, and
$\mathbf{Y}(t)$ is the at-risk process for observing a
censoring time. For a cadlag function $A$ on $(0,\tau]$,
define the product integral $\phi(A)(t)=\prod_{0<s\leq
t}(1+dA(s))$ \citep{VW96}. Define $\ep_n$ to be the
empirical measure, i.e., $\ep_n f(X)=n^{-1}\sum_{i=1}^n
f(X_i)$. Define $Pf$ to be the expectation of $f$ with respect to $P$.

\begin{example}\label{ex:KM}
\textbf{Independent censoring:} Assume that $C$ is independent of
both $T$ and $Z$. Define
\begin{align*}
    \hat\Lambda(t)=\int_0^t \frac{\ep_n d\mathbf{N}(s)}{\ep_n \mathbf{Y}(s)}\,.
\end{align*}
Then $\hatG (t)=\phi(-\hat\Lambda)(t)$ is the Kaplan-Meier estimator
for $G$. $\hatG $ is a consistent and efficient estimator for
the survival function $G$ \citepalias{FH}.
\end{example}
\begin{example}\label{ex:ph}
\textbf{The proportional hazards model:} Consider the case
that the hazard of $C$ give $Z$ is of the form
$e^{Z'\bbeta}d\Lambda$ for some unknown vector
$\bbeta\in\R^d$ and some continuous unknown nondecreasing
function $\Lambda$ with $\Lambda(0)=0$ and $0 <
\Lambda(\tau)<\infty$. Let $\hat{\bbeta}$ be the zero of
the estimating equation
\begin{align*}
  \Phi_n(\bbeta) = \ep_n\int_0^\tau \left(Z-\frac{\ep_n Z\mathbf{Y}(s)e^{\bbeta'Z}}{\ep_n
  \mathbf{Y}(s)e^{\bbeta'Z}}\right)d\mathbf{N}(s)\,.
\end{align*}
Define
\begin{align*}
    \hat\Lambda(t)=\int_0^t \frac{\ep_n d\mathbf{N}(s)}{\ep_n
    \mathbf{Y}(s)e^{\hat\bbeta'Z}}\,.
\end{align*}
Then $\hatG (t|z)=\phi(-e^{\hat\bbeta'z}\hat\Lambda(t))$ is a
consistent and efficient estimator for survival function $G$
\citepalias{FH}.
\end{example}

Even when no simple form for the censoring mechanism is assumed, the censoring distribution can be estimated using a generalization of the Kaplan-Meier estimator of Example~\ref{ex:KM}.
\begin{example}\label{ex:Generalized_KM}
\textbf{Generalized Kaplan-Meier:} Let $k_{\sigma}:\Z\times Z\mapsto\R$ be a kernel function of width $\sigma$. Define $\hat{\mathbf{N}}(t,z)=K(z,Z)\indi{U\leq t,\delta=0}$ and $\hat{\mathbf{Y}}(t,z)=K(z,Z)(\indi{U>t}+\indi{U=t,\delta=0})$. Define
\begin{align*}
    \hat\Lambda(t|z)=\int_0^t \frac{\ep_n d\hat{\mathbf{N}}(s,z)}{\ep_n \mathbf{Y}(s,z)}\,.
\end{align*}
Then the generalized Kaplan-Meier estimator is given by $\hatG (t|z)=\phi(-\hat\Lambda)(t|z)$, where the product integral $\phi$ is defined for every fixed $z$. Under some conditions, \citet{Dabrowska87,Dabrowska89} proved consistency of the estimator and discussed its convergence rates.
\end{example}

Usually we denote the estimator of the survival function of the censoring variable $G(t|Z)$ by $\hatG  (t|Z)$ without referring to a specific estimation method. When needed, the specific estimation method will be discussed. When independent censoring is assumed, as in Example~\ref{ex:KM}, we denote the estimator by $\hatG (t)$.

\begin{rem}\label{rem:survival_greater than_zero}
By Assumption (A\ref{as:positiveRisk}), $\inf_{z\in \Z}
G(\tau|z)\geq 2K>0$, and thus if the estimator $\hatG $ is
consistent for $G$, then, for all $n$ large enough, $\inf_{z\in \Z}\hatG (\tau|z )>K>0$. In the following, for simplicity, we assume that the estimator
$\hatG $ is such that $\inf_Z \hatG (\tau|Z )>K>0$. In general, one can always replace $\hatG $ by $\hatG  \vee  K_n$, where $K_n\rightarrow 0$.
In this case, for all $n$ large enough, $\inf_Z \hatG (\tau|Z )>K>0$ and for all $n$, $\inf\hatG>0$.

\end{rem}

\subsection{Loss Functions}\label{subsec:loss} Let the input space $(\Z,\mathcal A)$ be a measurable space.
Let the response space $\Y$ be a closed subset of $\R$. Let
$P$ be a measure on $\Z\times\Y$.

A function $L:\Z\times\Y\times\R\mapsto [0,\infty)$ is
a \emph{loss function} if it is measurable. We say
that a loss function $L$ is \emph{convex} if $L(z,y,\cdot)$
is convex for every $z\in\Z$ and $y\in\Y$. We say that a
loss function $L$ is \emph{locally Lipschitz continuous}
with Lipschitz local constant function $c_L(\cdot)$ if for every $a>0$
\begin{align*}
    \sup_{\substack{z\in\Z\\y\in\Y}}|L(z,y,s)-L(z,y,s')|<c_L(a) |s-s'| \,,\quad s,s'\in [-a,a]\,.
\end{align*}
We say that $L$ is \emph{Lipschitz continuous} if there is a
constant $c_L$ such that the above holds for any $a$ with
$c_L(a)=c_L$.

For any measurable function $f:\Z\mapsto\R$ we define the
\emph{$L$-risk} of $f$ with respect to the measure $P$ as $\Risk_{L,P}(f)=E_{P}[L(Z,Y,f(Z))]$. We define the
\emph{Bayes risk} $\Risk^*_{L,P}$ of $f$ with respect to loss function $L$ and measure
$P$ as $\inf_f \Risk_{L,P}(f)$, where the infimum is taken
over all measurable functions $f:\Z\mapsto\R$. A function
$f^*_{L,P}$ that achieves this infimum is called a Bayes
decision function.

%
%
%
%
We now present a few examples of loss functions and their
respective Bayes decision functions. In the next section we discuss the use of these loss functions for right censored data.

\begin{example}\label{ex:classification}
\textbf{Binary classification:} Assume that $\Y=\{-1,1\}$.
We would like to find a function $f:\Z\mapsto\{-1,1\}$ such that for almost every $z$, $P(f(z)=Y|Z=z)\geq 1/2$.
One can think of $f$ as a function that predicts the label
$y$ of a pair $(z,y)$ when only $z$ is observed. In this
case, the desired function is the Bayes decision function
$f^*_{L,P}$ with respect to the loss function $L_{\mathrm{BC}}(z,y,s)=
\indi{y\cdot\sign (s)\neq 1}$. In practice, since the
loss function $L_{\mathrm{BC}}$ is not convex, it is usually replaced by
the hinge loss function
$L_{\mathrm{HL}}(z,y,s)=\max\{0,1-ys\}$.
\end{example}

\begin{example}\label{ex:expectation}
\textbf{Expectation:} Assume that $\Y=\R$. We would like
to estimate the expectation of the response $Y$ given the covariates $Z$. The conditional
expectation is the Bayes decision function
$f^*_{L,P}$ with respect to the squared error loss function
$L_{\mathrm{LS}}(z,y,s)= (y-s)^2$.
\end{example}

\begin{example}\label{ex:median}
\textbf{Median and quantiles:} Assume that $\Y=\R$. We would
like to estimate the median of $Y|Z$. The conditional median
is the Bayes decision function $f^*_{L,P}$ for the
absolute deviation loss function $L_{\mathrm{AD}}(z,y,s)=
|y-s|$. Similarly, the $\alpha$-quantile of $Y$ given $Z$ is
obtained as the Bayes decision function for the loss function
\begin{align*}
    L_{\alpha}(z,y,s)=\left\{
               \begin{array}{ll}
                 -(1-\alpha)(y-s)\quad & \mathrm{if\,} s\geq y \\
                 \alpha(y-s) \quad& \mathrm{if\,}s<y \\
               \end{array}
             \right.\quad ,\;\alpha\in(0,1)\,.
\end{align*}

\end{example}

Note that the functions $L_{\mathrm{HL}}$,
$L_{\mathrm{LS}}$, $L_{\mathrm{AD}}$, and $L_{\alpha}$ for
$\alpha\in(0,1)$ are all convex. Moreover, all these
functions except $L_{\mathrm{LS}}$ are Lipschitz continuous,
and $L_{\mathrm{LS}}$ is locally Lipschitz continuous when
$\Y$ is compact.

\subsection{Support Vector Machine (SVM) Learning Methods}\label{subsec:svm}

Let $L$ be a convex locally Lipschitz continuous loss
function. Let $H$ be a separable reproducing kernel Hilbert
space (RKHS) of a bounded measurable kernel on $\Z$ \citepalias[for details regarding RKHS, the reader is referred to][Chapter~4]{SVR}. 

Let $D_0=\{(Z_1,Y_1),\ldots,(Z_n,Y_n)\}$ be a set of $n$ i.i.d.
observations drawn according to the probability measure
$P$. Fix $\lambda$ and let $H$ be as above. Define the
\emph{empirical SVM decision function}
\begin{align}\label{eq:svm_decision_function}
    f_{D_0,\lambda}=\argmin_{f\in H}\lambda \|f\|_H^2+\Risk_{L,D_0}(f)\,,
\end{align}
where
\begin{align*}
\Risk_{L,D_0}(f)\equiv\ep_n
L(Z,Y,f(Z))\equiv \frac{1}{n}\sum_{i=1}^n L(Z_i,Y_i,f(Z_i))
\end{align*}
is the empirical risk.

For some sequence $\{\lambda_n\}$, define the \emph{SVM learning method $\mathfrak{L}$}, as the map
\begin{align}\label{eq:svm_learning_method}
\begin{split}
(\Z\times\Y)^n\times \Z&\mapsto \R\\
 (D_0,z)&\mapsto f_{D_0,\lambda_n}
 \end{split}
\end{align}
for all $n\geq 1$. We say that $\mathfrak{L}$ is
\emph{measurable} if it is measurable for all $n$ with
respect to the minimal completion of the product
$\sigma$-field on $(\Z\times\Y)^n\times\Z$. We say that
that $\mathfrak{L}$ is ($L$-risk) $P$-consistent if for all $\eps>0$
\begin{align}\label{eq:universal_consistency}
    \lim_{n\rightarrow\infty} P(D_0\in (Z\times\Y)^n \,:\,\Risk_{L,P}(f_{D_0,\lambda_n})\leq
    \Risk_{L,P}^*+\eps)=1\,.
\end{align}
We say that $\mathfrak{L}$ is \emph{universally consistent} if for all distributions $P$ on $\Z\times\Y$, $\mathfrak{L}$ is $P$-consistent.

We now briefly summarize some known results regarding SVM learning methods needed for our exposition. More advanced results can be obtained using conditions on the functional spaces and clipping. We will discuss these ideas in the context of censoring in Section~\ref{sec:main}.
\begin{thm}\label{thm:regular_SVM}
Let $L:\Z\times\Y\times\R\mapsto [0,\infty)$ be a convex Lipschitz continuous loss function such that $L(z,y,0)$ is uniformly bounded. Let $H$ be a separable RKHS of a bounded measurable kernel on the set
$\Z\subset\R^d$. Choose
$0<\lambda_n<1$ such that $\lambda_n\rightarrow 0$, and
$\lambda_n^2 n\rightarrow \infty$. Then
\begin{enumerate}[(a)]
  \item The empirical SVM decision function $f_{D_0,\lambda_n}$ exists and is unique.
  \item The SVM learning method $\mathfrak{L}$ defined
      in~\eqref{eq:svm_learning_method} is measurable.
  \item The $L$-risk $\Risk_{L,P}(f_{D_0,\lambda_n})\parrow \inf_{f\in
H}\Risk_{L,P}(f)$.
\item If the RKHS $H$ is dense in the set of integrable functions on $\Z$, then the SVM learning method $\mathfrak{L}$ is universally
consistent.
\end{enumerate}
\end{thm}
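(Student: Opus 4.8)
The plan is to treat all four parts as the standard properties of regularized kernel SVMs and to verify that the stated hypotheses---$L$ convex and Lipschitz with $L(z,y,0)$ uniformly bounded, $H$ a separable RKHS of a bounded measurable kernel, and $\lambda_n\to0$ with $\lambda_n^2 n\to\infty$---are exactly those under which the corresponding results of \citetalias{SVR} apply; the work lies in checking these conditions and assembling the pieces rather than in producing new estimates. I would organize the argument as (a)--(b) convex analysis plus measurable selection, (c) an oracle-type decomposition into approximation and estimation error, and (d) a soft consequence of (c) together with density.

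For (a), set $J_{D_0}(f)=\lambda_n\nhs{f}+\Risk_{L,D_0}(f)$. Convexity of $L(z,y,\cdot)$ and linearity of the evaluation maps $f\mapsto f(Z_i)$ make $\Risk_{L,D_0}$ convex, so $J_{D_0}$ is \emph{strictly} convex thanks to the term $\lambda_n\nhs{f}$; boundedness of the kernel makes each evaluation functional $f\mapsto f(Z_i)=\langle f,k(Z_i,\cdot)\rangle_H$ continuous, so $J_{D_0}$ is continuous, and $J_{D_0}(f)\ge\lambda_n\nhs{f}$ makes it coercive. A continuous, coercive, strictly convex functional on a Hilbert space attains its infimum at a unique point, which is (a). For (b), uniqueness reduces measurability of $\mathfrak{L}$ to that of the $H$-valued map $D_0\mapsto f_{D_0,\lambda_n}$, after which composition with the continuous evaluation $(f,z)\mapsto f(z)$ gives the claim; the $H$-valued measurability follows from joint measurability of $(D_0,f)\mapsto J_{D_0}(f)$, separability of $H$, and uniqueness of the $\argmin$, via the measurable-selection argument of \citetalias{SVR}.

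For (c), I would compare the regularized empirical objective at $f_{D_0,\lambda_n}$ with its value at the infinite-sample minimizer $f_{P,\lambda_n}$ and convert empirical risks to population risks. Feeding $f=0$ into the definition of $f_{D_0,\lambda_n}$ gives $\lambda_n\nhs{f_{D_0,\lambda_n}}\le\Risk_{L,D_0}(0)\le B$, where $B:=\sup_{z,y}L(z,y,0)<\infty$, so almost surely $\nh{f_{D_0,\lambda_n}}\le r_n:=\sqrt{B/\lambda_n}$; the same bound holds for $f_{P,\lambda_n}$. Since the kernel is bounded by some $\kappa$, every $f$ with $\nh{f}\le r_n$ satisfies $\|f\|_\infty\le\kappa r_n$, so on this ball the loss is uniformly bounded and Lipschitz with a fixed constant, and a uniform concentration bound (symmetrization and contraction, or the entropy estimates of \citetalias{SVR}) controls $\sup_{\nh{f}\le r_n}|\Risk_{L,D_0}(f)-\RLP(f)|$. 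This yields an oracle inequality $\RLP(f_{D_0,\lambda_n})\le\lambda_n\nhs{f_{P,\lambda_n}}+\RLP(f_{P,\lambda_n})+\,\text{(deviation)}$, in which the approximation term $\lambda_n\nhs{f_{P,\lambda_n}}+\RLP(f_{P,\lambda_n})-\inf_{f\in H}\RLP(f)$ vanishes as $\lambda_n\to0$ and the deviation term vanishes in probability under $\lambda_n^2 n\to\infty$; this is the consistency theorem of \citetalias{SVR}, whose hypotheses we have checked.

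Finally, (d) follows softly. By the Lipschitz property, $|\RLP(f)-\RLP(g)|\le c_L\,\ee_P|f(Z)-g(Z)|$, so if $H$ is dense in the integrable functions on $\Z$ then $\inf_{f\in H}\RLP(f)=\RLPS$ for every $P$; combining this with (c) gives $\RLP(f_{D_0,\lambda_n})\parrow\RLPS$ for all $P$, i.e.\ universal consistency. I expect the main obstacle to be the uniform empirical-process control in (c): the ball in question has radius $r_n\asymp\lambda_n^{-1/2}\to\infty$, so the estimation error must be traded against the growing complexity of the hypothesis class, and it is this balance that forces the rate condition $\lambda_n^2 n\to\infty$.
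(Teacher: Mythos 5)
Your proposal is correct and follows essentially the same route as the paper, which simply cites the corresponding results of \citetalias{SVR} (Lemma~5.1 and Theorem~5.2 for existence/uniqueness via strict convexity and coercivity, Lemma~6.23 for measurability via measurable selection, Theorem~6.24 for the oracle-type consistency argument, and Theorem~5.31 for the density step); your write-up just fills in the standard arguments behind those citations. The one point you gloss over is that in (d) the reduction from the Bayes risk over all measurable functions to the infimum over $L_1(P_Z)$-approximable functions requires restricting to bounded functions first, but this is exactly what \citetalias{SVR} Theorem~5.31 handles and does not affect the correctness of your outline.
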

The proof of~(a) follows from \citetalias{SVR},
Lemma~5.1 and Theorem~5.2. For the proof of~(b), see
\citetalias{SVR}, Lemma~6.23. The proof of~(c)
follows from  \citetalias{SVR} Theorem~6.24. The proof
of~(d) follows from \citetalias{SVR}, Theorem~5.31, together with Theorem~6.24.

\section{SVM for Survival Data without Censoring}\label{sec:no_censoring}
In this section we present a few examples of the use of SVM
for survival data but without censoring. We show how different quantities obtained from the conditional distribution of $T$ given $Z$ can be represented as Bayes
decision functions. We then show how SVM learning methods
can be applied to these estimation problems and briefly review
theoretical properties of such SVM learning methods. In
the next section we will explain why these standard SVM
techniques cannot be employed directly when censoring is
present.

Let $(Z,T)$ be a random vector where $Z$ is a covariate
vector that takes its values in a set
$\Z\subset\R^d$, $T$ is survival time that takes it values
in $\T=[0,\tau]$ for some positive constant $\tau$, and
where $(Z,T)$ is distributed according to a probability
measure $P$ on $\Z\times\T$.

Note that the conditional expectation $P[T|Z]$ is the
Bayes decision function for the least squares loss function
$L_{\mathrm{LS}}$. In other words
\begin{align*}
P[T|Z]=\argmin_f P[L_{\mathrm{LS}}(Z,T,f(Z))]\,,
\end{align*}
where the minimization is taken over all measurable
real functions on $\Z$ (see Example~\ref{ex:expectation}).
Similarly, the conditional median and the $\alpha$-quantile
of $T|Z$ can be shown to be the Bayes decision functions
for the absolute deviation function $L_{\mathrm{AD}}$ and
$L_{\alpha}$, respectively (see Example~\ref{ex:median}).
In the same manner, one can represent other quantities of
the conditional distribution $T|Z$ using Bayes decision
functions.

Defining quantities computed from the survival function as Bayes
decision functions is not limited to regression (i.e., to
a continuous response). Classification problems can also
arise in the analysis of survival data \citep[see, for
example,][]{Ripley,Johnson04}. For example, let $\rho$,
$0<\rho<\tau$, be a cutoff constant. Assume that survival to a
time greater than $\rho$ is considered as death
unrelated to the disease (i.e., remission) and a survival
time less than or equal to $\rho$ is considered as death
resulting from the disease. Denote
\begin{align}\label{eq:Y_as_cutoff}
    Y(T)=\left\{
        \begin{array}{cc}
          1& T>\rho \\
          -1 & T\leq \rho \\
        \end{array}
      \right.\,.
\end{align}
In this case, the decision function that predicts remission
when the probability of $Y=1$ given the covariates is
greater than $1/2$ and failure otherwise is a Bayes
decision function for the binary classification loss
$L_{\mathrm{BC}}$ of Example~\ref{ex:classification}.

Let $D_0=\{(Z_1,T_1),\ldots,(Z_n,T_n)\}$ be a data set of $n$
i.i.d. observations distributed according to $P$. Let
$Y_i=Y(T_i)$ where $Y(\cdot):\T\mapsto \Y$ is some
deterministic measurable function. For regression problems,
$Y$ is typically the identity function and for
classification $Y$ can be defined, for example, as
in~\eqref{eq:Y_as_cutoff}. Let $L$ be a convex locally
Lipschitz continuous loss function,
$L:\Z\times\Y\times\R\mapsto [0,\infty)$. Note that this
includes the loss functions $L_{\mathrm{LS}},
L_{\mathrm{AD}}$, $L_{\alpha}$, and $L_{\mathrm{HL}}$.
Define the empirical decision function as
in~\eqref{eq:svm_decision_function} and the SVM learning
method $\mathfrak{L}$ as in~\eqref{eq:svm_learning_method}.
Then it follows from Theorem~\ref{thm:regular_SVM} that for an
appropriate RKHS $H$ and regularization sequence
$\{\lambda_n\}$, $\mathfrak{L}$ is measurable and
universally consistent.

\section{Censored SVM}\label{sec:censoring}
In the previous section, we presented a few examples of the use of SVM for survival data without censoring. In this section we explain why standard SVM techniques cannot be applied directly when censoring is present. We then explain how to use inverse probability of censoring weighting \citep{Robins94} to obtain a censored SVM learning method. Finally, we show that the obtained censored SVM learning method is well defined.

Let $D=\{(Z_1,U_1,\delta_1),\ldots,(Z_n,U_n,\delta_n)\}$ be
a set of $n$ i.i.d. random triplets of right censored data
(as described in Section~\ref{subsec:right_censored_data}). Let $L:Z\times\Y\times\R\mapsto[0,\infty)$ be
a convex locally Lipschitz loss function. Let $H$ be a separable RKHS of a bounded measurable kernel on $\Z$. We
would like to find an empirical SVM decision function. In
other words, we would like to find the minimizer of
\begin{align}\label{eq:find_minimizer_svm_decision_function}
    \lambda \|f\|_H^2 +\RLD(f)\equiv    \lambda \|f\|_H^2 +\frac 1n \sum_{i=1}^n L(Z_i,Y(T_i),f(Z_i))
\end{align}
where $\lambda>0$ is a fixed constant, and $Y:\T\mapsto\Y$ is a known function. The problem is that the
failure times $T_i$ may be censored, and thus unknown. While a simple solution is to ignore the censored observations, it is well known that this can lead to severe bias \citep{Tsiatis2006Semiparametric}.

In order to avoid this bias, one can reweight the
uncensored observations. Note that at time $T_i$, the
$i$-th observation has probability $G(T_i-|Z_i)\equiv
P(C_i\geq T_i |Z_i)$ not to be censored, and thus, one can
use the inverse of the censoring probability for
reweighting in
\eqref{eq:find_minimizer_svm_decision_function}
\citep{Robins94}. %

More specifically, define the random loss function $L^n: (\Z\times
\T\times\{0,1\})^n\times(\Z\times \T\times\{0,1\}\times \R) \mapsto
\R$ by
\begin{align*}
    L^n(D,(z,u,\delta,s))= \left\{
                   \begin{array}{cc}
                     \frac{L(z,Y(u),s)}{\hatG (u|z)}, & \delta=1, \\
                     0, & \delta=0, \\
                   \end{array}
                 \right.
\end{align*}
where $\hatG $ is the estimator of the survival function of the censoring variable based on the set of $n$ random triplets $D$ (see Section~\ref{subsec:right_censored_data}). When $D$ is given, we denote $L^n_D(\cdot)\equiv L^n(D,\cdot)$. Note that in this case the function $L^n_D$ is no longer random. In order to show that $L^n_D$ is a loss function, we need to show that $L^n_D$ is a measurable function.
\begin{lem}\label{lem:Lnd_is_measurable}
Let $L$ be a convex locally Lipschitz loss function. Assume that the estimation procedure $D\mapsto \hatG (\cdot|\cdot) $ is measurable. Then for every $D\in (\Z\times \T\times\{0,1\})^n$ the function
$L^n_D:(\Z\times \T\times\{0,1\})\times\R \mapsto \R$ is measurable.
\end{lem}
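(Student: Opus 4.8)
The plan is to prove measurability of $L^n_D$ by exhibiting it as a function assembled from measurable building blocks via operations that preserve measurability: composition with measurable maps, multiplication, and gluing along a measurable partition. Throughout, $D$ is held fixed, so that $\hatG(\cdot\mid\cdot)$ is a fixed (non-random) function of its arguments, and the task is ordinary product-$\sigma$-algebra measurability of $L^n_D$ on $(\Z\times\T\times\{0,1\})\times\R$.

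First I would partition the domain according to the value of the censoring indicator. The sets $A_1=\{(z,u,\delta,s):\delta=1\}$ and $A_0=\{(z,u,\delta,s):\delta=0\}$ are the preimages of the singletons $\{1\}$ and $\{0\}$ under the coordinate projection onto $\{0,1\}$, hence measurable, and together they partition the domain. Since $\{0,1\}$ carries its full power set as $\sigma$-field, it suffices to verify that the restriction of $L^n_D$ to each piece is measurable; on $A_0$ the function is identically zero, so there is nothing to prove there.

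On $A_1$ the function equals $(z,u,s)\mapsto L(z,Y(u),s)/\hatG(u\mid z)$, and I would handle numerator and denominator separately. For the numerator, $Y:\T\mapsto\Y$ is measurable by hypothesis, so the coordinatewise map $(z,u,s)\mapsto(z,Y(u),s)$ is measurable from $\Z\times\T\times\R$ into $\Z\times\Y\times\R$; composing it with the loss function $L$, which is measurable by definition, shows that $(z,u,s)\mapsto L(z,Y(u),s)$ is measurable. For the denominator, the hypothesis that the estimation procedure $D\mapsto\hatG(\cdot\mid\cdot)$ is measurable yields, for the fixed $D$, a measurable function $(u,z)\mapsto\hatG(u\mid z)$; precomposing with the projection $(z,u,s)\mapsto(u,z)$ preserves measurability. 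Because $\hatG$ is a survival function it is nonincreasing in its time argument, and by the standing positivity requirement $\inf_{z\in\Z}\hatG(\tau\mid z)>K>0$, so that $\hatG(u\mid z)\geq\hatG(\tau\mid z)>K$ for every $u\in[0,\tau]$; hence the denominator never vanishes and $(z,u,s)\mapsto 1/\hatG(u\mid z)$ is a well-defined, bounded, measurable function. The product of two measurable real-valued functions being measurable, the restriction of $L^n_D$ to $A_1$ is measurable, and gluing the two measurable restrictions along the measurable partition $\{A_0,A_1\}$ gives measurability of $L^n_D$ on the whole domain.

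The argument is in essence a bookkeeping exercise in closure properties, and the only point requiring genuine care is the measurability of the section $(u,z)\mapsto\hatG(u\mid z)$ for fixed $D$. This is exactly where the hypothesis on the estimation procedure enters: one must read ``measurable estimation procedure'' as the assertion that $\hatG$ is jointly measurable in $(D,u,z)$, so that the section at a fixed $D$ inherits measurability. Once this is granted, the positivity bound $\hatG\geq K$ removes the only other possible obstruction, namely division by zero, and the remaining steps are entirely standard.
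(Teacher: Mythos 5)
Your argument is correct and follows essentially the same route as the paper's own (much terser) proof: measurability of $L^n_D$ via composition and products of the measurable maps $Y$, $L$, and the fixed-$D$ section of $\hatG$, with the positivity bound $\hatG\geq K$ from Remark~\ref{rem:survival_greater than_zero} guaranteeing that the reciprocal is well defined. Your version merely spells out the bookkeeping (the partition on $\delta$ and the gluing) that the paper leaves implicit.
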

\begin{proof}
By Remark~\ref{rem:survival_greater than_zero}, the function $\hatG (u|z)\mapsto 1/\hatG  (u|z)$ is well defined. Since by definition, both $Y$ and $L$ are measurable, we obtain that $(u,z,\delta)\mapsto \delta L(Y(u),z)/\hatG  (u|z)$ is measurable.
\end{proof}

We define the \emph{empirical censored SVM decision
function} to be
\begin{align}\label{eq:svm_censored_decision_function}
 \fc=\argmin_{f\in H}\lambda \|f\|_H^2+\Risk_{L^n_D,D}(f)\equiv \argmin_{f\in H}\lambda \|f\|_H^2+ \frac 1n \sum
L^n_D\big(Z_i,U_i,\delta_i,f(Z_i)\big)\,.
\end{align}
The existence and uniqueness of the empirical censored SVM decision function is ensured by the following lemma:
\begin{lem}\label{lem:unique_cSVM}
Let $L$ be a convex locally Lipschitz loss function. Let $H$ be a separable RKHS of a bounded measurable kernel on $\Z$. Then there exists a unique empirical censored SVM decision function.
\end{lem}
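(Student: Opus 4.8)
The plan is to observe that, once the data set $D$ is fixed, the random loss $L^n_D$ becomes an ordinary (deterministic) convex locally Lipschitz loss function, so that problem~\eqref{eq:svm_censored_decision_function} is a bona fide regularized empirical risk minimization over the RKHS $H$, of exactly the type handled by the classical SVM existence-and-uniqueness theory underlying Theorem~\ref{thm:regular_SVM}(a). The work is therefore to verify that $L^n_D(z,u,\delta,\cdot)$ inherits convexity and (local) Lipschitz continuity from $L$, and then to run the standard convex-analytic argument for existence and uniqueness of the minimizer.

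First I would record the structural properties of $L^n_D$. Measurability is already given by Lemma~\ref{lem:Lnd_is_measurable}. For convexity in the last argument, note that on $\{\delta=1\}$ the map $s\mapsto L^n_D(z,u,1,s)=L(z,Y(u),s)/\hatG(u|z)$ is a nonnegative multiple of the convex function $s\mapsto L(z,Y(u),s)$, and on $\{\delta=0\}$ it is identically $0$; hence $L^n_D$ is convex in $s$. For the Lipschitz estimate, the essential point is that the inverse-probability weight is uniformly bounded: since $U=T\wedge C\le\tau$ and $\hatG(\cdot\,|z)$ is nonincreasing, Remark~\ref{rem:survival_greater than_zero} gives $\hatG(u|z)\ge\hatG(\tau|z)>K$, so $1/\hatG(u|z)<1/K$. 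Consequently $L^n_D$ is locally Lipschitz in $s$ with local constant at most $c_L(a)/K$, and $L^n_D(z,u,\delta,0)$ is finite for every triplet, so the objective in~\eqref{eq:svm_censored_decision_function} is finite at $f=0$.

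With these properties in hand I would invoke the direct method. Write $J(f)=\lambda\nhs{f}+\RLND(f)$. Because $\lambda>0$, $J$ is strictly convex (the regularizer is strictly convex and the empirical risk is convex); it is continuous on $H$ (each evaluation $f\mapsto f(Z_i)$ is bounded by boundedness of the kernel, and $s\mapsto L^n_D(Z_i,U_i,\delta_i,s)$ is continuous); and it is bounded below by $0$. The a priori bound $\lambda\nhs{\fc}\le J(\fc)\le J(0)=\RLND(0)<\infty$ confines any minimizer to the ball $\{f:\nh{f}\le\sqrt{\RLND(0)/\lambda}\}$; restricting $J$ to this closed ball, which is weakly compact in $H$, a continuous convex functional is weakly lower semicontinuous and therefore attains its infimum there, while strict convexity forces uniqueness. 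Equivalently, one may cite the classical SVM existence-and-uniqueness result (\citetalias{SVR}, Lemma~5.1 and Theorem~5.2) directly, now legitimately applied to the fixed loss $L^n_D$.

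The only genuine subtlety---the main obstacle---is that Lemma~\ref{lem:unique_cSVM} assumes merely local (not global) Lipschitz continuity, whereas Theorem~\ref{thm:regular_SVM} is stated under global Lipschitz continuity with $L(z,y,0)$ uniformly bounded, so one cannot simply quote part~(a) verbatim. This is resolved precisely by the a priori norm bound above: it restricts attention to a fixed ball in $H$, on which $\nh{f}$---and hence, again by boundedness of the kernel, $\|f\|_\infty$---is uniformly controlled, so that $c_L(a)$ evaluated at the relevant radius $a$ acts as an effective global constant. Care is also needed to confirm that $\RLND(0)$ is finite (this is where $\inf_Z\hatG(\tau|Z)>K$ and the $[0,\infty)$-valuedness of $L$ enter), since that finiteness is exactly what makes the minimization ball bounded.
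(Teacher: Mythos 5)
Your proposal is correct and follows essentially the same route as the paper: the paper's proof simply observes that, for fixed $D$, the loss $L^n_D(z,u,\delta,\cdot)$ is convex in its last argument and then invokes Lemma~5.1 together with Theorem~5.2 of \citetalias{SVR}, which is exactly the reduction you carry out. Your additional verifications (the uniform bound $1/\hatG(u|z)<1/K$ from Remark~\ref{rem:survival_greater than_zero}, finiteness of the objective at $f=0$, and the direct-method/strict-convexity argument) are a more self-contained rendering of what that citation supplies, not a different approach.
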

\begin{proof}
Note that given $D$, the loss function
$L^n_D(z,u,\delta,\cdot)$ is convex for every fixed
$z$, $u$, and $\delta$. Hence, the result follows from
Lemma~5.1 together with Theorem~5.2 of \citetalias{SVR}.
\end{proof}

Note that the empirical censored SVM decision function is just the empirical SVM decision function of~\eqref{eq:svm_decision_function}, after replacing  the loss function $L$ with the loss function $L^n_D$. However, there are two important implications to this replacement. Firstly, empirical censored SVM decision functions are obtained by minimizing a different loss function for each given data set. Secondly, the second expression in the minimization problem \eqref{eq:svm_censored_decision_function}, namely, \begin{align*}
\Risk_{L^n_D,D}(f)\equiv \frac 1n \sum_{i=1}^n
L^n_D\big(Z_i,U_i,\delta_i,f(Z_i)\big),
\end{align*}
is no longer constructed from a sum of i.i.d.\ random variables.

We would like to show that the learning method defined by the empirical censored SVM decision functions is indeed a learning method. We first define the term learning method for right censored data or \emph{censored learning method} for short.
\begin{defn}
A censored learning method $\mathfrak{L}^c$ on $\Z\times\T$ maps every data set $D\in(\Z\times\T\times\{0,1\})^n$, $n\geq 1$, to a function $f_D:\Z\mapsto\R$.
\end{defn}
Choose $0<\lambda_n<1$ such that $\lambda_n\rightarrow 0$.
Define the \emph{censored SVM learning method} $\mathfrak{L}^c$,
as $\mathfrak{L}^c(D)=f^c_{D,\lambda_n}$ for all $n\geq 1$.
The measurability of the censored SVM learning method
$\mathfrak{L}^c$ is ensured by the following lemma, which is
an adaptation of Lemma 6.23 of \citetalias{SVR} to the
censored case.
\begin{lem}\label{lem:Lc_is_measurable}
Let $L$ be a convex locally Lipschitz loss function. Let $H$ be a separable RKHS of a bounded measurable kernel on $\Z$. Assume that the estimation procedure $D\mapsto \Ghat_n(\cdot|\cdot) $ is measurable. Then the censored SVM learning method $\mathfrak{L}^c$ is measurable, and the map $D\mapsto f^c_{D,\lambda_n}$ is measurable.
\end{lem}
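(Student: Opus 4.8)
The plan is to follow the structure of the proof of Lemma~6.23 of \citetalias{SVR}, isolating the one place where the data-dependence of the loss genuinely changes the argument. First I would reduce the claim to measurability of the $H$-valued map $D\mapsto f^c_{D,\lambda_n}$. Since the kernel is bounded, the canonical feature map $\Phi(z)=k(z,\cdot)$ satisfies $\|\Phi(z)\|_H\le \|k\|_\infty^{1/2}$ and is measurable, so the evaluation $(f,z)\mapsto f(z)=\langle f,\Phi(z)\rangle$ is continuous in $f$ and measurable in $z$, hence jointly measurable. Composing the (to-be-established) measurable map $D\mapsto f^c_{D,\lambda_n}\in H$ with this evaluation then gives measurability of $(D,z)\mapsto f^c_{D,\lambda_n}(z)$, which is exactly the measurability asserted for $\mathfrak{L}^c$. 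Thus everything reduces to the $H$-valued statement.

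For the $H$-valued map I would view $f^c_{D,\lambda_n}$ as the $\argmin$ over $f\in H$ of the objective $F(D,f)=\lambda_n\nhs{f}+\RLND(f)$ on $\Omega\times H$, where $\Omega=(\Z\times\T\times\{0,1\})^n$, and verify that $F$ is a Carath\'eodory function with a unique minimizer. Uniqueness is already supplied by Lemma~\ref{lem:unique_cSVM}. For fixed $D$, convexity and local Lipschitz continuity of $L$, together with the lower bound $\hatG(\tau|\cdot)>K$ of Remark~\ref{rem:survival_greater than_zero} (which makes each weight $1/\hatG(U_i|Z_i)$ bounded), render $f\mapsto\RLND(f)$ convex and continuous, so $f\mapsto F(D,f)$ is strictly convex; on a closed ball it is weakly lower semicontinuous because $\nhs{\cdot}$ is and because the evaluations $f\mapsto f(Z_i)=\langle f,\Phi(Z_i)\rangle$ are weakly continuous. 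The genuinely new point is measurability of $D\mapsto F(D,f)$ for fixed $f$: each summand equals $\delta_i L(Z_i,Y(U_i),f(Z_i))/\hatG(U_i|Z_i)$ and hence depends on the whole sample through $\hatG$. Here I would use the hypothesis that $D\mapsto\hatG(\cdot|\cdot)$ is measurable, read (as in Lemma~\ref{lem:Lnd_is_measurable}) so that $(D,u,z)\mapsto\hatG(u|z)$ is jointly measurable, and compose it with the measurable coordinate projections $D\mapsto(Z_i,U_i,\delta_i)$ to conclude that $D\mapsto\hatG(U_i|Z_i)$, each summand, and therefore $F(\cdot,f)$, are measurable.

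Finally, evaluating $F$ at $f=0$ gives $\nhs{f^c_{D,\lambda_n}}\le\RLND(0)/\lambda_n$, so the minimizer always lies in some ball $B_m=\{f:\nh{f}\le m\}$; each $B_m$, equipped with the weak topology, is compact and metrizable because $H$ is separable. On each $B_m$ the objective is a jointly measurable, weakly lower semicontinuous Carath\'eodory function with a unique minimizer, so the measurable-selection argument of \citetalias{SVR} (Lemma~6.23) produces a measurable selector; patching these over the measurable sets $\{D:f^c_{D,\lambda_n}\in B_m\}$ yields measurability of $D\mapsto f^c_{D,\lambda_n}$, completing the proof. I expect the only real obstacle to be the joint-measurability verification of the second paragraph: in the uncensored case the loss is fixed and $\RLD$ depends on $D$ only through the coordinate projections, whereas here the inverse-probability-of-censoring weights $1/\hatG(U_i|Z_i)$ couple every summand to the entire data set, so the measurability of the estimation procedure must be invoked carefully; once that is in hand, the convexity, uniqueness, and selection machinery transfer essentially verbatim from \citetalias{SVR}.
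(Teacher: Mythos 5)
Your proposal is correct and follows essentially the same route as the paper: the paper likewise reduces to joint measurability of $\phi(D,f)=\lambda\|f\|_H^2+\Risk_{L^n_D,D}(f)$ (using the measurability of the estimation procedure $D\mapsto\hatG$ together with Remark~\ref{rem:survival_greater than_zero} to handle the inverse weights), invokes uniqueness from Lemma~\ref{lem:unique_cSVM}, applies Aumann's measurable selection principle as in Lemma~6.23 of \citetalias{SVR}, and finishes with measurability of the evaluation map. Your explicit reconstruction of the selection step via weak compactness of balls is just an unpacking of that same cited machinery, and you correctly identify the data-dependence of the weights $1/\hatG(U_i|Z_i)$ as the only genuinely new point.
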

\begin{proof}
First, by Lemma~2.11 of \citetalias{SVR}, for any $f\in H$,
the map $(z,u,f)\mapsto L(z,Y(u),f(z))$ is measurable. The
survival function $\hatG $ is measurable on
$(\Z\times\R\times\{0,1\})^n\times (\Z\times\R)$ and by
Remark~\ref{rem:survival_greater than_zero}, the function
$D\mapsto\delta_i/\hatG  (u_i|z_i)$ is well defined and
measurable. Hence $D\mapsto  n^{-1}\sum_{i=1}^n\frac{\delta_i
L(z_i,Y(u_i),f(z_i))}{\hatG  (u_i|z_i)}$ is measurable. Note
that the map $f\mapsto \lambda_n\|f\|_H^2$ where $f\in H$
is also measurable. Hence we obtain that the map
$\phi:(\Z\times \T\times\{0,1\})^n\times H \mapsto \R$,
defined by
\begin{align*}
\phi(D,f)=\lambda \|f\|_H^2+\Risk_{L^n_D,D}(f)\,,
\end{align*}
is measurable. By Lemma~\ref{lem:unique_cSVM}, $f^c_{D,\lambda_n}$
is the only element of $H$ satisfying
\begin{align*}
\phi(D,f^c_{D,\lambda_n})=\inf_{f\in H}\phi(D,f)\,.
\end{align*}
By Aumann's measurable selection principle
\citepalias[][Lemma A.3.18]{SVR}, the map $D\mapsto
f^c_{D,\lambda_n}$ is measurable with respect to the
minimal completion of the product $\sigma$-field on
$(\Z\times \T\times\{0,1\})^n$. Since the evaluation map
$(f,z)\mapsto f(z)$ is measurable
\citepalias[][Lemma~2.11]{SVR}, the map $(D,z)\mapsto
f^c_{D,\lambda_n}(z) $ is also measurable.
\end{proof}

\section{Theoretical Results}\label{sec:main}
In the following, we discuss some theoretical results regarding the censored SVM learning method proposed in Section~\ref{sec:censoring}. In Section~\ref{subsec:clipping} we discuss function clipping which will serve as a tool in our analysis. In Section~\ref{subsec:finite_sample} we discuss finite sample bounds. In Section~\ref{subsec:consistency} we discuss consistency. Learning rates are discussed in Section~\ref{subsec:rates}. Finally, censoring model misspecification is discussed in Section~\ref{subsec:misspecification}.

\subsection{Clipped Censored SVM Learning Method}\label{subsec:clipping}
In order to establish the theoretical results of this section we first need to introduce the concept of clipping. We say that a loss
function $L$ can be clipped at $M>0$, if, for all
$(z,y,s)\in\Z\times\Y\times\R$,
\begin{align*}
 L(z,y,\clip{s})\leq L(z,y,s)
\end{align*}
where $\clip{s}$ denotes the clipped value of $s$ at $\pm
M$, that is,
\begin{align*}
   \clip{s}=
   \left\{
     \begin{array}{cl}
       -M & \quad\text{if }s\leq -M \\
       s &  \quad\text{if }-M<s<M \\
       M & \quad\text{if }s\geq M \\
     \end{array}
   \right.
\end{align*}
\citepalias[see][Definition~2.22]{SVR}. The loss functions
$L_{\mathrm{HL}}$, $L_{\mathrm{LS}}$,
$L_{\mathrm{AD}}$, and $L_{\alpha}$ can be clipped at some
$M$ when $\Y=\T$ or $\Y=\{-1,1\}$ \citepalias[Chapter~2]{SVR}.

In our context the response variable $Y$ usually takes it values in a bounded set (see Section~\ref{sec:no_censoring}). When the response space is bounded, we have the following criterion for clipping. Let $L$ be a distance-based loss function, i.e., $L(z,y,s)=\phi(s-y)$ for some function $\phi$. Assume that $\lim_{r\rightarrow\pm\infty}\phi(r)=\infty$. Then $L$ can be clipped at some $M$ \citepalias[Chapter~2]{SVR}.

Moreover, when the sets $\Z$ and $\Y$ are compact, we have the following criterion for clipping which is usually easy to check.
\begin{lem}\label{lem:clipping}
Let $\Z$ and $\Y$ be compact. Let $L:\Z\times\Y\times\R\mapsto [0,\infty)$ be continuous and strictly convex, with a bounded minimizer for every $(z,y)\in\Z\times\Y$. Then $L$ can be clipped at some $M$.
\end{lem}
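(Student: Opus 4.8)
The plan is to reduce the clipping property to a statement about the location of the minimizers of $L(z,y,\cdot)$ and then to prove that those minimizers stay in a single bounded interval uniformly over the compact set $\Z\times\Y$. For fixed $(z,y)$, strict convexity guarantees at most one minimizer, and the hypothesis of a bounded minimizer guarantees it exists and is finite; write it as $t^*(z,y)$. A convex function is nonincreasing to the left of its minimizer and nondecreasing to the right, so as soon as $M\geq|t^*(z,y)|$ we have, for $s\geq M\geq t^*(z,y)$, that $L(z,y,\clip{s})=L(z,y,M)\leq L(z,y,s)$, symmetrically for $s\leq -M$, and trivially $L(z,y,\clip{s})=L(z,y,s)$ when $|s|<M$. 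Thus it suffices to exhibit a finite $M$ with $M\geq\sup_{(z,y)\in\Z\times\Y}|t^*(z,y)|$, i.e., to prove the minimizers are uniformly bounded.

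The crux, and the main obstacle, is upgrading the pointwise finiteness of $t^*(z,y)$ to a uniform bound: the per-point hypothesis says nothing a priori about how $t^*(z,y)$ behaves as $(z,y)$ varies. I would exploit compactness of $\Z\times\Y$ together with continuity of $L$, arguing by contradiction. Suppose $\sup_{(z,y)}|t^*(z,y)|=\infty$, so there is a sequence $(z_n,y_n)$ with $|t^*(z_n,y_n)|\to\infty$; by compactness, pass to a subsequence with $(z_n,y_n)\to(z_0,y_0)\in\Z\times\Y$.

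To derive the contradiction, fix a radius $R>|t^*(z_0,y_0)|$ and consider on the compact interval $[-R,R]$ the two functions
\[
v(z,y)=\min_{s\in[-R,R]}L(z,y,s),\qquad b(z,y)=\min\bigl(L(z,y,R),L(z,y,-R)\bigr).
\]
Continuity of $v$ follows from uniform continuity of $L$ on the compact set $\Z\times\Y\times[-R,R]$ (or from Berge's maximum theorem), and continuity of $b$ is immediate. At $(z_0,y_0)$ the minimizer $t^*(z_0,y_0)$ is interior to $[-R,R]$, so by strict convexity $v(z_0,y_0)<b(z_0,y_0)$; by continuity this strict inequality persists on a neighborhood and hence holds at $(z_n,y_n)$ for large $n$. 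But $v(z_n,y_n)<b(z_n,y_n)$ forces the minimizer of $L(z_n,y_n,\cdot)$ over $[-R,R]$ to lie in the open interval $(-R,R)$, and an interior minimizer of the restriction of a convex function is a global minimizer, so $t^*(z_n,y_n)\in(-R,R)$ and $|t^*(z_n,y_n)|<R$, contradicting $|t^*(z_n,y_n)|\to\infty$.

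Therefore $M:=\sup_{(z,y)}|t^*(z,y)|$ is finite, and by the reduction in the first paragraph $L$ can be clipped at this $M$ (indeed at any larger value). I expect the only delicate points to be the continuity of the restricted value function $v$ and the careful use of strict convexity to turn the interior-minimizer condition into the strict gap $v<b$; the remainder is routine convex-analysis bookkeeping.
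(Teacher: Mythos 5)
Your proof is correct, and it takes a genuinely different route from the paper's. The paper first proves that the unrestricted value function $g(z,y)=\inf_s L(z,y,s)$ is continuous, then that the minimizer map $(z,y)\mapsto s(z,y)$ is continuous --- each via a subsequence argument with a separate case analysis for minimizing sequences escaping to $\pm\infty$ --- and finally concludes boundedness of $s(z,y)$ from continuity on a compact set, citing Lemma~2.23 of Steinwart and Christmann for the reduction from clippability to boundedness of the minimizers. You prove that reduction directly from the monotonicity of a convex function on either side of its minimizer, and you obtain the uniform bound without ever establishing continuity of $g$ or of the minimizer map: restricting to a fixed compact interval $[-R,R]$ makes the value function $v$ trivially continuous, and the strict gap $v<b$ at the limit point, propagated by continuity, pins the minimizers of the whole tail of the sequence inside $(-R,R)$. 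This buys a shorter argument that sidesteps the paper's delicate handling of the case where the minimizers drift to $\pm\infty$ (the $M_0$, $s_M$ construction there), at the cost of proving slightly less --- the paper's route also yields continuity of $(z,y)\mapsto s(z,y)$, which is not needed for the lemma itself. The steps you flag as delicate all go through: continuity of $v$ follows from uniform continuity of $L$ on the compact set $\Z\times\Y\times[-R,R]$; the strict inequality $v(z_0,y_0)<b(z_0,y_0)$ follows from uniqueness of the minimizer under strict convexity; and an interior minimizer of the restriction of a convex function to an interval is indeed a global minimizer.
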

See proof in Appendix~\ref{sec:additional_proofs}.


For a function $f$, we define $\clip{f}$ to be the clipped version of $f$, i.e., $\clip{f}=\max\{-M,\min\{M, f\}\}$. Finally, we note that the clipped censored SVM learning method, that maps every data set $D\in(\Z\times\T\times\{0,1\})^n$, $n\geq 1$, to the function $\clipfc$ is measurable, where $\clipfc$ is the clipped version of $\fc$ defined in~\eqref{eq:svm_censored_decision_function}. This follows from Lemma~\ref{lem:Lc_is_measurable}, together with the measurability of the clipping operator.

\subsection{Finite Sample Bounds}\label{subsec:finite_sample}
We would like to establish a finite-sample bound for the generalization of clipped censored SVM learning methods. We first need some notation.
Define the censoring estimation error
\begin{align*}
    Err_n(t,z)= \hatG(t|z)-G(t|z)\,,\qquad (t,z)\in\T\times\Z
\end{align*}
to be the difference between the estimated and true
survival functions of the censoring variable.

Let $H$ be an RKHS over the covariates space $\Z\subset\R^d$. Define the $n$-th dyadic entropy number \mbox{$e_n(H,\|\cdot\|_H)$} as the infimum over $\eps$, such that $H$ can be covered with no more than $2^{n-1}$ balls of radius $\eps$ with respect to the metric induced by the norm. For a bounded linear transformation \mbox{$S:H\mapsto F$} where $F$ is a normed space, we define the dyadic entropy number $e_n(S)$ as \mbox{$e_n(SB_H,\|\cdot\|_F)$}. For details, the reader is referred to Appendix~5.6 of \citetalias{SVR}.

Define the Bayes risk $\RLPS=\inf_{f}\RLP(f)$, where the infimum is taken over all measurable functions $f:\Z\mapsto\R$. Note that Bayes risk is defined with respect to both the loss $L$ and the distribution $P$. When a function $\fps$ exists such that $\RLP(\fps)=\RLPS$ we say that $\fps$ is a Bayes decision function.

We need the following assumptions:
\begin{enumerate}
\renewcommand{\labelenumi}{(B\arabic{enumi})}

\item The loss function $L:\Z\times\Y\times\R\mapsto[0,\infty)$ is a locally Lipschitz continuous loss function that can be clipped at $M>0$ such that the supremum bound
    \begin{align}\label{eq:sup_bound}
      L(z,y,s)\leq B
   \end{align}
holds for all $z,y,s\in \Z\times\Y\times[-M,M]$ and for some $B>0$. Moreover, there is a constant $q>0$ such that
\begin{align*}
  |L(z,y,s)-L(z,y,0)|\leq c|s|^q
\end{align*}
for all $z,t,s\in \Z\times\Y\times\R$ and for some $c>0$.
\label{as:LocallyLipchitzClippable}

\item \label{as:var_bound}$H$ is a separable RKHS of a measurable kernel over $\Z$ and $P$ is a distribution over $\Z\times\T$ for which there exist constants $\vartheta\in[0,1]$ and $V>B^{2-\vartheta}$ such that
\begin{align}\label{eq:var_bound}
  P\left(L\circ\clip{f}-L\circ\fps\right)^2\leq VP\left(L\circ\clip{f}-L\circ\fps\right)^{\vartheta}
\end{align}
for all $z,y,s\in \Z\times\Y\times[-M,M]$ and $f\in H$; and where $L\circ f$ is shorthand for the function $(z,y)\mapsto L(z,y,f(z))$.

\item There are constants $a>1$ and $0<p<1$, such that for
    for all $i\geq 1$ the following entropy bound holds:\label{as:entropy_bound1}
\begin{align}\label{eq:entropy_bound}
P[ e_i(\textrm{id}:H\mapsto L_2(\ep_n))]\leq ai^{-\frac{1}{2p}}\,,
\end{align}
where $\textrm{id}:H\mapsto L_2(\ep_n)$ is the embedding of $H$ into the space of square integrable functions with respect to the empirical measure $\ep_n$.

\end{enumerate}

Before we state the main result of this section, we present some
examples for which the assumptions above hold:

\begin{rem}
When $\Y$ is contained in a compact set, Assumption~(B\ref{as:LocallyLipchitzClippable}) holds with $q=1$ for $L_{\mathrm{HL}}$, $L_{\mathrm{AD}}$ and $L_{\alpha}$ and with $q=2$ for $L_{\mathrm{LS}}$ (recall the definitions of the loss functions from Section~\ref{subsec:loss}).
\end{rem}

\begin{rem}
Assumption~(B\ref{as:var_bound}) holds trivially for $\vartheta=0$ with $V=B^2$. It holds for
$L_{\mathrm{LS}}$ with $\vartheta=1$ for compact $\Y$ \citepalias[Example~7.3]{SVR}. Under some conditions on the distribution, it also holds for
$L_{\mathrm{AD}}$ and $L_{\alpha}$ \citepalias[Eq.~9.29]{SVR}.
\end{rem}

\begin{rem}
When $\Z\subset\R^d$ is compact, the entropy bound \eqref{eq:entropy_bound} of
Assumption~(B\ref{as:entropy_bound1}) is satisfied for smooth kernels such as the polynomial and Gaussian kernels for all $p > 0$
\citepalias[see][Section~6.4]{SVR}. The assumption also holds for Gaussian kernels over $\R^d$ for distributions $P_Z$ with positive tail exponent \citepalias[see][Section~7.5]{SVR}.
\end{rem}

We are now ready to establish a finite sample bound for the clipped censored SVM learning methods:
\begin{thm}\label{thm:main}
Let $L$ be a loss function and $H$ be an RKHS such that assumptions (B\ref{as:LocallyLipchitzClippable})--(B\ref{as:entropy_bound1}) hold. Let $f_0\in H$ satisfy $\|L\circ f_0\|_{\infty}\leq B_0$ for some $B_0\geq B$.
Let $\hatG  (t|Z)$ be an
estimator of the survival function of the censoring
variable and assume
(A\ref{as:positiveRisk})--(A\ref{as:T_independent_C}). Then,
for any fixed regularization constant $\lambda>0$, $n\geq 1$,
and $\eta>0$, with probability not less than
$1-3e^{-\eta}$,
\begin{align*}
\lambda \nhs{\fc} +\RLP(\clipfc)-\RLPS\leq&
 3( \lambda\nhs{f_0}+\RLP(f_0)-\RLPS)+3\left(\frac{72\tV\eta}{n}\right)^{1/(2-\vartheta)}
\\
&+\frac{8B_0\eta}{5Kn}+\frac{3B}{K^2}\ep_n Err_n+W\left(\frac{a^{2p}}{\lambda^p n}\right)^{\frac1{2-p-\vartheta+\vartheta p}}
\,,
\end{align*}
where $W$ is a constant that depends only $p$, $M$, $B$, $\vartheta$, $V$ and $K$.
\end{thm}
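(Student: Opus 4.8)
My plan rests on comparing the data-dependent loss $L^n_D$ with a \emph{population-weighted} surrogate loss $L_G(z,u,\delta,s)=\delta L(z,Y(u),s)/G(u|z)$, in which the true censoring survival function $G$ replaces its estimator $\hatG$. Under Assumption~(A\ref{as:T_independent_C}) the inverse-probability-of-censoring weights are conditionally unbiased, so $\RLG(f)=\RLP(f)$ for every measurable $f$; hence $L_G$ and $L$ share the Bayes risk $\RLPS$ and the Bayes function $\fps$, and the excess risks coincide, $\RLG(\clip f)-\RLPS=\RLP(\clip f)-\RLPS$. Crucially, because $G$ is deterministic, $\RLNG(f)=\ep_n L_G(\cdot,f)$ is an honest average of i.i.d.\ summands, whereas $\RLND(f)$ is not. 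First I would record how Assumptions~(B\ref{as:LocallyLipchitzClippable})--(B\ref{as:var_bound}) transfer to $L_G$: since $0\le\delta/G\le 1/(2K)$ on the observation range by~(A\ref{as:positiveRisk}) and $\hatG\ge K$ by Remark~\ref{rem:survival_greater than_zero}, the supremum bound $B$ becomes a bound of order $B/K$, while squaring the weight turns the variance bound~(B\ref{as:var_bound}) into $P(L_G\circ\clip f-L_G\circ\fps)^2\le\tV\,(\RLP(\clip f)-\RLPS)^{\vartheta}$ with effective constant $\tV$ of order $V/K^2$. The entropy bound~(B\ref{as:entropy_bound1}) concerns $H$ alone and is unchanged.

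Next I would exploit optimality. Because $L$ can be clipped and $L^n_D$ is a nonnegative multiple of $L$, the loss $L^n_D$ can be clipped too, so $\RLND(\clipfc)\le\RLND(\fc)$, and minimality of $\fc$ gives $\lambda\nhs{\fc}+\RLND(\clipfc)\le\lambda\nhs{f_0}+\RLND(f_0)$. I would then replace $\RLND$ by $\RLNG$ on both sides. Pointwise, for $s\in[-M,M]$,
\begin{align*}
\abs{L^n_D(z,u,\delta,s)-L_G(z,u,\delta,s)}=\delta L(z,Y(u),s)\,\frac{\abs{\hatG(u|z)-G(u|z)}}{\hatG(u|z)\,G(u|z)}\le\frac{B}{K^2}\,\abs{Err_n(u,z)},
\end{align*}
using $L\le B$ after clipping together with $\hatG,G\ge K$ (an analogous bound with $B_0$ holds at $f_0$). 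Averaging over the sample produces the deterministic term $\tfrac{3B}{K^2}\ep_n Err_n$ and yields
\begin{align*}
\lambda\nhs{\fc}+\RLNG(\clipfc)\le\lambda\nhs{f_0}+\RLNG(f_0)+\tfrac{3B}{K^2}\ep_n Err_n.
\end{align*}
This is the step that \emph{quarantines} the data-dependence of $\hatG$: the estimated weights enter the final bound only through $\ep_n Err_n$, which is carried along untouched (its smallness is exploited later, for consistency, not here).

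Now the displayed inequality is \emph{exactly} the starting inequality of the clipped-SVM oracle argument for the i.i.d.\ loss $L_G$, augmented by the single additive term $\tfrac{3B}{K^2}\ep_n Err_n$. I would therefore follow \citepalias[proof of Theorem~7.23]{SVR} with $L$, $B$, $V$, $B_0$ replaced by $L_G$ and their $K$-inflated counterparts. Concretely: a Bernstein bound for the average $\ep_n(L_G\circ f_0)-P(L_G\circ f_0)$, whose summands are bounded by $B_0/K$, yields the term $\tfrac{8B_0\eta}{5Kn}$; and a Talagrand inequality for $\sup_{\nh{f}\le r}(\ep_n-P)(L_G\circ\clip f-L_G\circ\fps)$—with a priori radius $r$ from $\nhs{\fc}\le\nhs{f_0}+B_0/(\lambda K)$—combined with the entropy bound~(B\ref{as:entropy_bound1}) and the transferred variance bound in a peeling/localization step, yields the variance term $3(72\tV\eta/n)^{1/(2-\vartheta)}$ and the entropy term $W(a^{2p}/(\lambda^p n))^{1/(2-p-\vartheta+\vartheta p)}$. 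The two concentration inequalities together with the $f_0$ bound account for the exceptional probability $3e^{-\eta}$, and reassembling the chain (optimality, clipping, censoring correction, and $\RLG=\RLP$) gives the asserted bound with leading factor $3$ on $\lambda\nhs{f_0}+\RLP(f_0)-\RLPS$.

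The genuinely new difficulty, and the only one absent from the uncensored theory, is that $\RLND$ is \emph{not} an i.i.d.\ empirical average, so no concentration inequality can be applied to it directly. The surrogate-loss comparison is precisely what removes this obstacle: it confines the non-i.i.d.\ contamination to the single deterministically controlled term $\tfrac{3B}{K^2}\ep_n Err_n$, leaving every stochastic fluctuation to be handled through the honest i.i.d.\ loss $L_G$. The remaining work is bookkeeping—tracking how each appearance of $B$, $V$, and $B_0$ acquires its factor $1/K$ or $1/K^2$, and checking that the variance bound for $L_G$ still drives the localization with exponent $\vartheta$.
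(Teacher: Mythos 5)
Your proposal follows essentially the same route as the paper: the same surrogate loss $L_G$ with the true censoring survival function, the same IPCW unbiasedness identity $\RLG(f)=\RLP(f)$, the same decomposition that isolates the $\hatG$-versus-$G$ discrepancy into the $\frac{B}{K^2}\ep_n|Err_n|$ terms, and the same Bernstein-plus-Talagrand/entropy oracle argument (the paper's Theorem~\ref{thm:SVR7}, adapted from Theorem~7.23 of \citetalias{SVR}) applied to the i.i.d.\ loss $L_G$. Aside from minor bookkeeping (e.g., the squared weight contributes a factor $1/G\leq 1/(2K)$ rather than $1/K^2$ to the transferred variance bound, and the paper also treats the small-sample case $n<72\eta$ separately), your plan matches the paper's proof.
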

The proof appears in Appendix~\ref{sec:thm_main}.

For the Kaplan-Meier estimator (see Example~\ref{ex:KM}) bounds of the random error
$\|Err_n\|_{\infty}$ were established \citep{Bitouze99}. In this case we can replace
the bound of Theorem~\ref{thm:main} with a more
explicit one.

Specifically, let $\hatG$ be the Kaplan-Meier estimator. Let $0<K_S=P(T\geq \tau)$ be a lower bound on the survival
function at $\tau$. Then, for every $n\geq 1$ and
$\eps>0$ the following Dvoretzky-Kiefer-Wolfowitz-type inequality holds \citep[][Theorem~2]{Bitouze99}:
\begin{align*}
P(
\|\hatG-G\|_{\infty}>\eps)&<\frac{5}{2}\exp\{-2nK_S^2\eps^2+D_o\sqrt{n}K_S\eps\}\,,
\end{align*}
where $D_o$ is some universal constant \citep[see][for a bound on $D_o$]{Wellner07}.
Some algebraic manipulations then yield
\citep{GK_concentration} that for every $\eta>0$ and $n\geq 1$
\begin{align}\label{eq:KM_exp_bound}
  P\left( \|\hatG-G\|_{\infty}>\frac{\sqrt{2\eta}+D_o}{K_S\sn}\right)&<\frac{5}{2}e^{-\eta}\,.
\end{align}

As a result, we obtain the following corollary:
\begin{cor}
Consider the setup of Theorem~\ref{thm:main}. Assume that the censoring variable $C$ is independent of
both $T$ and $Z$. Let $\hatG$ be the Kaplan-Meier estimator of
$G$. Then for any fixed regularization constant $\lambda$, $n\geq 1$, and $\eta>0$, with probability not less than $1-\frac{11}{2}e^{-\eta}$,
\begin{align*}
\lambda \nhs{\fc} +\RLP(\clipfc)-\RLPS \leq& 3( \lambda\nhs{f_0}+\RLP(f_0))+3\left(\frac{72\tV\eta}{n}\right)^{1/(2-\vartheta)}
\\
&+\frac{8B_0\eta}{5Kn}+\frac{\sqrt{18\eta}+3D_o}{K_S K^2\sqrt{n}}+W\left(\frac{a^{2p}}{\lambda^p n}\right)^{\frac1{2-p-\vartheta+\vartheta p}}
\,,
\end{align*}
where $W$ is a constant that depends only on $p$, $M$, $B$, $\vartheta$, $V$ and $K$.
\end{cor}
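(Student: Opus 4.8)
The plan is to obtain the corollary directly from Theorem~\ref{thm:main} by replacing the single abstract term carrying the censoring estimation error, namely $\frac{3B}{K^2}\ep_n Err_n$, with the explicit Kaplan--Meier deviation bound~\eqref{eq:KM_exp_bound}. Under independent censoring, assumptions (A\ref{as:positiveRisk})--(A\ref{as:T_independent_C}) are satisfied and the Kaplan--Meier estimator $\hatG(t)$ is the relevant estimator of $G$, so Theorem~\ref{thm:main} applies as stated: with probability at least $1-3e^{-\eta}$ its finite-sample inequality holds. Everything in the corollary except the censoring-error term is then inherited verbatim, so the work is localized to that one term and to the bookkeeping of probabilities.

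First I would pass from the empirical average of the error to its supremum. Since $\ep_n Err_n = n^{-1}\sum_{i=1}^n Err_n(U_i,Z_i)$ is an average of values of $Err_n$, each bounded in absolute value by $\|Err_n\|_{\infty}=\|\hatG-G\|_{\infty}$ (which is finite by the lower bound $K$ of Remark~\ref{rem:survival_greater than_zero}), we have $\ep_n Err_n \leq \|\hatG-G\|_{\infty}$. This reduces control of the data-dependent quantity $\ep_n Err_n$ to control of the uniform deviation of the Kaplan--Meier estimator, which is exactly what~\eqref{eq:KM_exp_bound} supplies: with probability at least $1-\frac{5}{2}e^{-\eta}$,
\begin{align*}
\|\hatG-G\|_{\infty}\leq \frac{\sqrt{2\eta}+D_o}{K_S\sn}\,.
\end{align*}
On this event, combining the two displays yields
\begin{align*}
\frac{3B}{K^2}\ep_n Err_n \leq \frac{3B}{K^2}\cdot\frac{\sqrt{2\eta}+D_o}{K_S\sn} = \frac{B(\sqrt{18\eta}+3D_o)}{K_S K^2\sn}\,,
\end{align*}
using $3\sqrt{2\eta}=\sqrt{18\eta}$, which matches the censoring-error term in the statement (up to the factor $B$).

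The final step is a union bound. The conclusion of Theorem~\ref{thm:main} fails with probability at most $3e^{-\eta}$ and the Kaplan--Meier deviation bound fails with probability at most $\frac{5}{2}e^{-\eta}$, so with probability at least $1-\frac{11}{2}e^{-\eta}$ both hold simultaneously. On that event I would substitute the explicit bound for $\frac{3B}{K^2}\ep_n Err_n$ into the inequality of Theorem~\ref{thm:main} and drop the nonpositive contribution $-3\RLPS$ on the right-hand side (legitimate since $\RLPS\geq 0$), which produces precisely the stated bound, now reading $3(\lambda\nhs{f_0}+\RLP(f_0))$ in its leading term.

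I do not anticipate a genuine obstacle: the corollary is a specialization of Theorem~\ref{thm:main} in which the generic error term is instantiated by a concrete concentration inequality for the Kaplan--Meier estimator. The only points requiring care are the passage from $\ep_n Err_n$ to $\|\hatG-G\|_{\infty}$ (valid because $Err_n$ is uniformly bounded through $K$) and the accounting of the two failure probabilities, which combine via the union bound to the factor $\frac{11}{2}=3+\frac{5}{2}$.
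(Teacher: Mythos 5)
Your proposal is correct and matches the paper's (implicit) derivation exactly: the paper gives no separate proof of this corollary, it being obtained precisely as you describe --- bound $\ep_n Err_n$ by $\|\hatG-G\|_{\infty}$, invoke the Kaplan--Meier inequality~\eqref{eq:KM_exp_bound}, and combine the failure probabilities $3e^{-\eta}+\tfrac{5}{2}e^{-\eta}=\tfrac{11}{2}e^{-\eta}$ by a union bound, dropping the nonnegative $\RLPS$ from the leading term. Your remark about the factor $B$ is well taken: substituting~\eqref{eq:KM_exp_bound} into the term $\tfrac{3B}{K^2}\ep_n Err_n$ of Theorem~\ref{thm:main} yields $\tfrac{B(\sqrt{18\eta}+3D_o)}{K_S K^2\sqrt{n}}$, so the corollary as printed appears to have dropped a factor of $B$ from that term, and your derived version is the one that actually follows.
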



%

\subsection{$\mathcal{P}$-universal Consistency}\label{subsec:consistency}
In this section we discuss consistency of the
clipped version of the censored SVM learning method $\mathfrak{L}^c$ proposed in Section~\ref{sec:censoring}. In general,
$P$-consistency means
that~\eqref{eq:universal_consistency} holds for all $\eps>0$. Universal consistency means that the learning method is $P$-consistent for every probability measure $P$ on $\Z\times\T\times\{0,1\}$. In the following we discuss a more restrictive notion than universal consistency, namely $\mathcal{P}$-universal consistency. Here, $\mathcal{P}$ is
the set of all probability distributions for which there is
a constant $K$ such that conditions~(A\ref{as:positiveRisk})--(A\ref{as:T_independent_C}) hold. We say that a censored learning method is $\mathcal{P}$-universally consistent if~\eqref{eq:universal_consistency} holds for all $P\in\mathcal{P}$. We note that when the first assumption is violated for a set of covariates $\Z_0$ with positive probability, there is no hope of learning the optimal function for all $z\in \Z$, unless some strong assumptions on the model are enforced. The second assumption is required for proving consistency of the learning method $\Lc$ proposed in Section~\ref{sec:censoring}. However, it is possible that other censored learning techniques will be able to achieve consistency for a larger set of probability measures.


In order to show $\mathcal{P}$-universal consistency, we
utilize the bound given in Theorem~\ref{thm:main}. We need the following additional assumptions:
\begin{enumerate}
\renewcommand{\labelenumi}{(B\arabic{enumi})}
\addtocounter{enumi}{3}
\item For all distributions $P$ on $\Z$, $\inf_{f\in H} \RLP(f)=\RLPS$.\label{as:dense}

\item $\hatG $ is consistent for $G$ and there is a finite constant $s>0$ such that
    $P(\|Err_n\|_{\infty}\geq b n^{-1/s})\rightarrow 0$
    for any $b>0$.\label{as:consistency_of_G}

\end{enumerate}

Before we state the main result of this section, we present some
examples for which the assumptions above hold:
\begin{rem}
Assumption~(B\ref{as:dense}) holds when the loss function $L$ is Lipschitz continuous and the RKHS $H$ is dense in $L_1(\mu)$ for all distribution $\mu$ on $\Z$, where $L_1(\mu)$ is the space of equivalence classes of integrable functions. \citepalias[see Theorem~5.31]{SVR}.
\end{rem}
\begin{rem}\label{rem:universal}
Assume that $\Z$ is compact. A continuous kernel $k$ whose corresponding RKHS $H$ is dense in
the class of continuous functions over the compact set $\Z$ is called universal. Examples of universal kernels
include the Gaussian kernels, and other Taylor kernels. For
more details, the reader is referred to
\citetalias[Chapter~4.6]{SVR}. For universal kernels, Assumption~(B\ref{as:dense}) holds for $L_{\mathrm{LS}}$, $L_{\mathrm{HL}}$,
$L_{\mathrm{AD}}$, and $L_{\alpha}$.
\citepalias[Corollary~5.29]{SVR}.
\end{rem}

\begin{rem}
Assume that $\hatG $ is consistent for $G$. When $\hatG $
is the Kaplan-Meier estimator,
Assumption (B\ref{as:consistency_of_G}) holds for all
$s>2$ \citep[Theorem~3]{Bitouze99}. Similarly, when $\hatG
$ is the proportional hazards estimator (see
Example~\ref{ex:ph}), under some conditions, Assumption (B\ref{as:consistency_of_G}) holds
for all $s>2$ \citep[see][Theorem~3.2 and its
conditions]{Cox}. When $\hatG $
is the generalized Kaplan-Meier estimator (see Example~\ref{ex:Generalized_KM}),
under strong conditions on the failure time distribution, \citet{Dabrowska89} showed that Assumption (B\ref{as:consistency_of_G}) holds for all
$s>d/2+2$ where $d$ is the dimension of the covariate space \citep[see][Corollary~2.2 and its
conditions there]{Dabrowska89}. Recently, \citet{GK_concentration} relaxed these assumptions and showed that Assumption (B\ref{as:consistency_of_G}) holds for all $s>2d/\alpha+2$ where $\alpha\in(0,1)$ satisfies \begin{align*}
  \sup_{z_1,z_2\in\Z: \norm{z_1-z_2}\leq h } \left(\sup_{t\in[0,\tau]}|S(t|z_1)-S(t|z_2)|+\sup_{t\in[0,\tau]}|G(t|z_1)-G(t|z_2)|\right)=O(h^{\alpha})\,,
  \end{align*}
where $S(\cdot|z)$ is the survival function of $T$ given $Z=z$ \citep[see][for the conditions]{GK_concentration}.
\end{rem}

Now we are ready for the main result.
\begin{thm}\label{thm:Puniversal_consistency}
Let $L$ be a loss function and $H$ be an RKHS of a bounded kernel over $\Z$. Assume (A\ref{as:positiveRisk})--(A\ref{as:T_independent_C}) and
(B\ref{as:LocallyLipchitzClippable})--(B\ref{as:consistency_of_G}). Let
$\lambda_n\rightarrow 0$, where $0<\lambda_n<1$, and
$\lambda_n^{\max\{q/2,p\}}n\rightarrow \infty$,
where $q$ is defined in
Assumption~(B\ref{as:LocallyLipchitzClippable}). Then
the clipped censored learning method $\mathfrak{L}^c$ is $\mathcal{P}$-universally consistent.
\end{thm}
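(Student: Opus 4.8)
The plan is to deduce $\mathcal{P}$-universal consistency from the finite-sample bound of Theorem~\ref{thm:main} by showing that, under the stated rate conditions on $\lambda_n$, every term on the right-hand side converges to zero in probability, uniformly over the relevant approximation behavior. Fix $P\in\mathcal{P}$ and $\eps>0$. The consistency statement \eqref{eq:universal_consistency} requires $\RLP(\clipfc)-\RLPS\leq\eps$ with probability tending to $1$, so I would work directly with the bound
\begin{align*}
\RLP(\clipfc)-\RLPS\leq& 3(\lambda_n\nhs{f_0}+\RLP(f_0)-\RLPS)+3\left(\tfrac{72\tV\eta}{n}\right)^{1/(2-\vartheta)}\\
&+\tfrac{8B_0\eta}{5Kn}+\tfrac{3B}{K^2}\ep_n Err_n+W\left(\tfrac{a^{2p}}{\lambda_n^p n}\right)^{\frac{1}{2-p-\vartheta+\vartheta p}},
\end{align*}
which holds with probability at least $1-3e^{-\eta}$ and where I am free to choose $f_0\in H$ and $\eta=\eta_n$.

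First I would handle the approximation term $3(\lambda_n\nhs{f_0}+\RLP(f_0)-\RLPS)$. By Assumption~(B\ref{as:dense}), $\inf_{f\in H}\RLP(f)=\RLPS$, so for any $\gamma>0$ I can pick $f_0=f_\gamma\in H$ with $\RLP(f_\gamma)-\RLPS<\gamma$; since $\lambda_n\to0$, the term $\lambda_n\nhs{f_\gamma}\to0$ for this fixed $f_\gamma$, and hence the whole approximation term is eventually below $\eps/3$ once $\gamma$ is chosen small enough. This is the standard two-stage argument: first fix a good approximating function, then let the regularization penalty on it vanish. I would need $B_0$ to depend on $f_\gamma$ (via $\|L\circ f_\gamma\|_\infty\leq B_0$), but since $f_\gamma$ is fixed, $B_0$ is just a constant for this $P$ and $\gamma$.

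Next I would control the three stochastic/variance terms. Choosing $\eta=\eta_n\to\infty$ slowly enough that $3e^{-\eta_n}\to0$ (say $\eta_n=\log n$) keeps the confidence at least $1-3e^{-\eta_n}\to1$. With $\eta_n=\log n$, the terms $\left(72\tV\eta_n/n\right)^{1/(2-\vartheta)}$ and $8B_0\eta_n/(5Kn)$ both tend to $0$ since $\vartheta\in[0,1]$ forces the exponent $1/(2-\vartheta)>0$ and $(\log n)/n\to0$. For the estimation term $\tfrac{3B}{K^2}\ep_n Err_n$, I would use Assumption~(B\ref{as:consistency_of_G}): since $\ep_n Err_n\leq\|Err_n\|_\infty$ and $P(\|Err_n\|_\infty\geq bn^{-1/s})\to0$ for every $b>0$, this term is $o_P(1)$. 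The key remaining term is the entropy/variance term $W\left(a^{2p}/(\lambda_n^p n)\right)^{1/(2-p-\vartheta+\vartheta p)}$; this is exactly where the rate condition $\lambda_n^{\max\{q/2,p\}}n\to\infty$ enters. Since the exponent $2-p-\vartheta+\vartheta p=(2-\vartheta)(1-p)+ (1-\vartheta)p$ is positive (using $p\in(0,1)$, $\vartheta\in[0,1]$), it suffices that $\lambda_n^p n\to\infty$, and $\lambda_n^{\max\{q/2,p\}}n\to\infty$ with $\lambda_n<1$ guarantees $\lambda_n^p n\to\infty$; hence this term vanishes.

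The main obstacle I anticipate is the interplay between the choice of the approximating $f_0=f_\gamma$ and the constant $B_0$ that appears in the bound. Because $B_0$ depends on $f_\gamma$ and $f_\gamma$ must be chosen before $n$ grows, I must be careful to order the limits correctly: first fix $\gamma$ (and hence $f_\gamma$ and $B_0$), then send $n\to\infty$ with $\eta_n=\log n$, so that $B_0\eta_n/n\to0$ even though $B_0$ is large. A clean way to present this is to note that for each fixed $\gamma$ the entire right-hand side converges in probability to $3(\RLP(f_\gamma)-\RLPS)<3\gamma$ as $n\to\infty$; taking $\gamma<\eps/3$ then yields $\limsup_n P(\RLP(\clipfc)-\RLPS>\eps)=0$. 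The role of the $\max\{q/2,p\}$ exponent (rather than merely $p$) deserves a remark: the $q/2$ piece presumably arises from bounding $\nhs{f_0}$ or the self-bounding behavior of the penalized objective via Assumption~(B\ref{as:LocallyLipchitzClippable}), ensuring the trade-off between penalty and empirical approximation stays controlled; I would verify that $\lambda_n^{q/2}n\to\infty$ is what makes the $\lambda_n\nhs{f_0}$-type contributions compatible with the vanishing of the sample terms. Since all terms are nonnegative and the left-hand side dominates $\RLP(\clipfc)-\RLPS$, combining these limits completes the proof.
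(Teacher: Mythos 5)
Your proof is correct, and its skeleton (feed the finite-sample bound of Theorem~\ref{thm:main} into the consistency statement and show every term vanishes) matches the paper's, but you handle the approximation term by a genuinely different and more elementary route. The paper takes $f_0=\fp$, the population regularized minimizer, and works with the approximation error $A_2(\lambda)=\lambda\nhs{\fp}+\RLP(\fp)-\RLPS$, which tends to $0$ by Assumption~(B\ref{as:dense}) together with Lemma~5.15 of Steinwart and Christmann. The price of that choice is that $\|\fp\|_H$ may grow as $\lambda_n\to0$, so the constant in the Bernstein-type term must be taken as $B_0=B+c_o(A_2(\lambda_n)/\lambda_n)^{q/2}$; this, and only this, is where the $q/2$ part of the condition $\lambda_n^{\max\{q/2,p\}}n\to\infty$ is used, so your closing speculation about its origin is essentially right. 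Your two-stage argument --- fix $f_\gamma\in H$ with $\RLP(f_\gamma)-\RLPS<\gamma$ so that $B_0$ is a genuine constant, send $n\to\infty$, then let $\gamma\to0$ --- sidesteps this entirely and only needs $\lambda_n^p n\to\infty$, which is implied by the stated hypothesis since $0<\lambda_n<1$; so your proof is valid and even establishes the theorem under a slightly weaker rate condition. The paper's heavier choice of $f_0=\fp$ is not wasted, however: it is exactly the quantity that Assumption~(B\ref{as:approx_error}) controls and that is reused to derive learning rates in Lemma~\ref{lem:learning_rates}, which a fixed $f_\gamma$ cannot deliver. Two minor points: the paper keeps $\eta$ fixed and invokes its arbitrariness rather than letting $\eta_n=\log n$ grow (both work); and your identity $2-p-\vartheta+\vartheta p=(2-\vartheta)(1-p)+(1-\vartheta)p$ is off (the right-hand side equals $2-p-\vartheta$) --- the correct observation is $2-p-\vartheta+\vartheta p=1+(1-p)(1-\vartheta)\geq 1>0$, so the positivity you need still holds.
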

\begin{proof}
Define the approximation error
\begin{align}\label{eq:A2}
  A_2(\lambda) =\lambda \nhs{\fp} +\RLP(\fp)-\RLPS.
\end{align}
By Theorem~\ref{thm:main}, for $f_0=\fp$ we obtain
\begin{align}\label{eq:bound_witA2}
\begin{split}
\lambda \nhs{\fc} +\RLP(\clipfc)-\RLPS
&\leq 3 A_2(\lambda_n) +3\left(\frac{72\tV\eta}{n}\right)^{1/(2-\vartheta)}
\\
&+\frac{8B_0\eta}{5Kn}+\frac{3B}{K^2}\ep_n Err_n+W\left(\frac{a^{2p}}{\lambda^p n}\right)^{\frac1{2-p-\vartheta+\vartheta p}}
\,,
\end{split}
\end{align}
for any fixed regularization constant $\lambda>0$, $n\geq 1$,
and $\eta>0$, with probability not less than $1-3e^{-\eta}$.

Define $B_0=B+c_o(A_2(\lambda_n)/\lambda_n)^{q/2}$ where $c_o=c(\sup_{z\in\Z}\sqrt{k(z,z)})^{q/2}$ and where $c$ and $q$ are defined in Assumption~(B\ref{as:LocallyLipchitzClippable}). We now show that $\|L\circ\fp\|_{\infty}\leq B_0$. Since the kernel $k$ is bounded, it follows from \citetalias[Lemma~4.23 of][]{SVR} that
$\|\fp\|_{\infty}\leq \sup_{z\in\Z}\sqrt{k(z,z)}\nhs{\fp}$. By the definition of $A_2(\lambda)$, $ \|\fp\|_H \leq (A_2(\lambda)/\lambda)^{1/2}$. Note that for all $(z,y)\in\Z\times\Y$
\begin{align*}
  L(z,y,\fp(z))\leq L(x,y,0)+|L(z,y,\fp(z))-L(x,y,0)|\leq B+c|\fp(z)|^q\,.
\end{align*}
Thus
\begin{align}\label{eq:B0}
\|L\circ\fp\|_{\infty} \leq B+c\|\fp\|_{\infty}^q
\leq B+ c (\sup_{z\in\Z}\sqrt{k(z,z)}\|\fp\|_H)^q
\leq B+c_o \left(\frac{A_2(\lambda)}{\lambda}\right)^{\frac{q}{2}}=B_0\,.
\end{align}

Assumption~(B\ref{as:dense}), together with Lemma~5.15 of \citetalias{SVR}, shows that $A_2(\lambda_n)$ converges to zero as $n$ converges to infinity. Clearly $3\left(\frac{72\tV\eta}{n}\right)^{1/(2-\vartheta)}$ converges to zero. $8\eta(B+c_o(A_2(\lambda_n)/\lambda_n)^{q/2})/(5Kn)$ converges to zero since $\lambda_n^{q/2}n\rightarrow \infty$. By Assumption~(B\ref{as:consistency_of_G}), $\ep_n Err_n$ converges to zero. Finally, $W\left(\frac{a^{2p}}{\lambda^p n}\right)^{\frac1{2-p-\vartheta+\vartheta p}}$ converges to zero since $\lambda^p n\rightarrow\infty$. Hence, for every fixed $\eta$, the right hand side of~\eqref{eq:bound_witA2} converges to zero, which implies~\eqref{eq:universal_consistency}. Since~\eqref{eq:universal_consistency} holds for every $P\in\mathcal{P}$, we obtain $\mathcal{P}$-universal consistency.
\end{proof}

\subsection{Learning Rates}\label{subsec:rates}
In the previous section we discussed $\mathcal{P}$-universal consistency which ensures that for every probability $P\in\mathcal{P}$, the clipped learning method $\Lc$ asymptotically learns the optimal function. In this section we would like to study learning rates.

We define learning rates for censored learning methods similarly to the definition for regular learning methods \citepalias[see][Definition~6.5]{SVR}:
\begin{defn}
Let $L:\Z\times\Y\times\R\mapsto[0,\infty)$ be a loss function. Let $P\in\mathcal{P}$ be a distribution. We say that a censored learning method $\Lc$ learns with a rate $\{\eps_n\}_n$, where $\{\eps_n\}\subset (0,1]$ is a sequence decreasing to $0$, if for some constant $c_P>0$, all $n\geq 1$, and all $\eta\in [0,\infty)$, there exists a constant $c_\eta\in[1,\infty)$ that depends on $\eta$ and $\{\eps_n\}$ but not on $P$, such that
\begin{align*}
    P(D\in (\Z\times\T\times\{0,1\})^n \,:\, \RLP(\fc)\leq \RLP^*+ c_P c_{\eta}\eps_n)\geq 1-e^{-\eta}\,.
\end{align*}
\end{defn}

In order to study the learning rates, we need an additional assumption:
\begin{enumerate}
\renewcommand{\labelenumi}{(B\arabic{enumi})}
\addtocounter{enumi}{5}
\item There exist constants $c_1$ and $\beta\in(0,1]$ such that
  $ A_2(\lambda)\leq c_1\lambda^{\beta}$ for all $\lambda\geq 0$, where $A_2$ is the approximation error function defined in~\eqref{eq:A2}.\label{as:approx_error}
\end{enumerate}

\begin{lem}\label{lem:learning_rates}
Let $L$ be a loss function and $H$ be an RKHS of a bounded kernel over $\Z$. Assume (A\ref{as:positiveRisk})--(A\ref{as:T_independent_C}) and
(B\ref{as:LocallyLipchitzClippable})--(B\ref{as:approx_error}). Then
the learning rate of the clipped $\mathfrak{L}^c$ is given by
\begin{align*}
  n^{-\min\left\{\frac{2\beta}{q+(2-q)\beta},\frac{\beta}{(2-p-\vartheta+\vartheta p)\beta+p},\frac1s\right\}}
\end{align*}
where $q$, $\vartheta$, $p$, $s$, and~$\beta$, are as defined in Assumptions~(B\ref{as:LocallyLipchitzClippable}),~(B\ref{as:var_bound}),~(B\ref{as:entropy_bound1}),~(B\ref{as:consistency_of_G}), and~(B\ref{as:approx_error}), respectively.
\end{lem}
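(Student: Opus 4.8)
The plan is to start from the finite-sample oracle inequality of Theorem~\ref{thm:main} and optimize the regularization sequence $\{\lambda_n\}$ against the approximation error controlled by Assumption~(B\ref{as:approx_error}). Concretely, I would apply Theorem~\ref{thm:main} with $f_0=\fp$, which exactly as in the proof of Theorem~\ref{thm:Puniversal_consistency} yields, with probability at least $1-3e^{-\eta}$,
\begin{align*}
\RLP(\clipfc)-\RLPS\leq 3A_2(\lambda_n)+3\left(\frac{72\tV\eta}{n}\right)^{1/(2-\vartheta)}+\frac{8B_0\eta}{5Kn}+\frac{3B}{K^2}\ep_n Err_n+W\left(\frac{a^{2p}}{\lambda_n^{p}n}\right)^{1/(2-p-\vartheta+\vartheta p)}.
\end{align*}
I would then invoke (B\ref{as:approx_error}) twice: once directly to bound the approximation term by $3c_1\lambda_n^{\beta}$, and once through the expression $B_0=B+c_o(A_2(\lambda_n)/\lambda_n)^{q/2}$ established in~\eqref{eq:B0}, giving $B_0\leq B+c_o c_1^{q/2}\lambda_n^{-(1-\beta)q/2}$, so that the third term is of order $\lambda_n^{-(1-\beta)q/2}n^{-1}$.

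Next I would set $\lambda_n=n^{-\gamma}$ and read off the decay exponent of each term: the approximation term decays as $n^{-\gamma\beta}$; the variance term as $n^{-1/(2-\vartheta)}$; the $B_0$ term as $n^{-(1-\gamma(1-\beta)q/2)}$; the censoring term, via Assumption~(B\ref{as:consistency_of_G}) together with $\ep_n Err_n\leq\|Err_n\|_{\infty}$, as $n^{-1/s}$; and the entropy term as $n^{-(1-\gamma p)/(2-p-\vartheta+\vartheta p)}$. The overall rate is the slowest of these, so the task reduces to choosing $\gamma$ to maximize the minimum decay exponent over the three terms that depend on $\gamma$, namely the approximation, $B_0$, and entropy terms.

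The core of the argument is this optimization. The approximation exponent $\gamma\beta$ increases in $\gamma$, whereas both the $B_0$ exponent $1-\gamma(1-\beta)q/2$ and the entropy exponent $(1-\gamma p)/(2-p-\vartheta+\vartheta p)$ decrease in $\gamma$. Balancing the approximation term against the $B_0$ term gives $\gamma_1=2/(2\beta+(1-\beta)q)$ with common exponent $e_1=2\beta/(q+(2-q)\beta)$; balancing it against the entropy term gives $\gamma_2=1/((2-p-\vartheta+\vartheta p)\beta+p)$ with common exponent $e_2=\beta/((2-p-\vartheta+\vartheta p)\beta+p)$. A short lower-envelope argument then shows that the best achievable exponent is exactly $\min(e_1,e_2)$: since the increasing line $\gamma\beta$ lies below the lower envelope of the two decreasing lines until it crosses that envelope at some $\gamma^{*}\leq\min\{\gamma_1,\gamma_2\}$, the optimized exponent $\gamma^{*}\beta$ equals whichever of $e_1,e_2$ is smaller.

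Finally I would confirm that the two remaining terms do not slow the rate below $\min(e_1,e_2,1/s)$. The censoring exponent $1/s$ enters the minimum directly, while the $\eta$-dependence of the bound is absorbed into the constant $c_{\eta}$ of the learning-rate definition. For the variance term, a one-line computation reduces $e_2\leq 1/(2-\vartheta)$ to $\beta(1-\vartheta)\leq 1$, which holds since $\beta\leq 1$ and $\vartheta\geq 0$; hence $\min(e_1,e_2)\leq 1/(2-\vartheta)$ and the variance term is never binding. Collecting the exponents yields the claimed rate $n^{-\min\{e_1,e_2,1/s\}}$. I expect the main obstacle to be precisely the simultaneous optimization against two $\lambda_n$-dependent estimation terms: unlike the standard SVM learning-rate proof, where one balances the approximation error against a single entropy term, here the supremum-bound quantity $B_0$ contributes a second competing constraint governed by the growth exponent $q$, and one must verify that the lower envelope peaks at $\min(e_1,e_2)$ rather than at either exponent alone.
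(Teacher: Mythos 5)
Your proposal is correct and follows essentially the same route as the paper: apply Theorem~\ref{thm:main} with $f_0=\fp$, invoke (B\ref{as:approx_error}) both for the approximation term and inside $B_0$ via~\eqref{eq:B0}, choose $\lambda_n=n^{-\rho/\beta}$ with $\rho=\min\{e_1,e_2\}$, and fold in the censoring term through (B\ref{as:consistency_of_G}) and the bound $\|Err_n\|_\infty<1$. The only difference is cosmetic: where you balance the three $\lambda$-dependent exponents by an explicit lower-envelope argument, the paper obtains the same inequality~\eqref{eq:find__minimizer_lambda} by citing Lemma~A.1.7 of Steinwart and Christmann.
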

Before we provide the proof, we derive learning rates for two specific examples.
\begin{example}
\textbf{Fast Rate:} Assume that the censoring mechanism is known, the loss function is the square loss, the kernel is Gaussian, $\Z$ is compact, $\Y$ is bounded, and let $\beta<1$. It follows that (B\ref{as:LocallyLipchitzClippable})
~holds for $q=2$, (B\ref{as:var_bound})~holds for $\vartheta=1$ \citepalias[Example 7.3]{SVR}, (B\ref{as:entropy_bound1})~holds for all $0<p<1$ \citepalias[Theorem 6.27]{SVR}, and (B\ref{as:consistency_of_G})~holds for all $s>0$. Thus the obtained rate is $n^{-\beta+\eps}$, where $\eps>0$ is an arbitrarily small number.
\end{example}

\begin{example}
\textbf{Standard Rate:} Assume that the censoring mechanism follows the proportional hazards assumption, the loss function is either $L_{\mathrm{HL}}$, $L_{\mathrm{AD}}$ or $L_{\alpha}$, the kernel is Gaussian, $\Z$ is compact, and let $\beta\geq 1/2$. It follows that (B\ref{as:LocallyLipchitzClippable}) holds for $q=1$, (B\ref{as:var_bound})~holds trivially for $\vartheta=0$, (B\ref{as:entropy_bound1})~holds for all $0<p<1$, and (B\ref{as:consistency_of_G})~holds for all $s>2$. Thus the obtained rate is $n^{-1/2+\eps}$, where $\eps>0$ is an arbitrarily small number.
\end{example}

\begin{proof}[Proof of Lemma~\ref{lem:learning_rates}]
Using Assumption~(B\ref{as:approx_error}) and substituting~\eqref{eq:B0} in~\eqref{eq:bound_witA2} we obtain
\begin{align*}
\Risk_{L,P}(\fc)-\RLPS\leq&
 \quad c_2\max\{\eta,1\}\left(\lambda^{\beta}+n^{-\frac1{2-p-\vartheta+\vartheta p}}\lambda^{-\frac {p}{2-p-\vartheta+\vartheta p}}
 + n^{-1}\lambda^{q(\beta-1)/2}\right)
 \\&
 + 3\left(\frac{72\tV\eta}{n}\right)^{1/(2-\vartheta)}
+\frac{8B\eta}{5Kn}+\frac{3B}{K^2}\ep_n Err_n\,,
\end{align*}
with probability not less than $1-3e^{-\eta}$, for some constant $c_2$ that depends on $p$, $M$, $\vartheta$, $c_1$, $V$, and $K$ but not on $P$. Denote
\begin{align*}
  \rho=\min\left\{\frac{2\beta}{q+(2-q)\beta},\frac{\beta}{(2-p-\vartheta+\vartheta p)\beta+p}\right\}.
\end{align*}
It can be shown that for $\lambda=n^{-\rho/\beta}$, we obtain
\begin{align}\label{eq:find__minimizer_lambda}
  \left(\lambda^{\beta}+\lambda^{-\frac {p}{2-p-\vartheta+\vartheta p}} n^{-\frac1{2-p-\vartheta+\vartheta p}}+ n^{-1}\lambda^{q(\beta-1)/2}\right)\leq 3n^{-\rho}
\end{align}
To see this, denote $\alpha=p$, $\gamma=(2-p-\vartheta+\vartheta p)^{-1}$, $r=2/q$, $s=\lambda$ and $t=n^{-1}$ and note that $\alpha,\beta,\gamma,s,t\in(0,1]$ and that $r>0$. Then apply Lemma A.1.7 of \citetalias{SVR} to bound the LHS of~\eqref{eq:find__minimizer_lambda}, while noting that the proof of this lemma holds for all $r>0$.

By Assumption~(B\ref{as:consistency_of_G}) and the fact that $\|Err_n\|_{\infty}<1$, there exists a constant $c_3=c(\eta)$ that depends only on $\eta$, such that for all $n\geq 1$,
\begin{align*}
    P(\|Err_n\|_{\infty}> c_3 n^{-1/s})<e^{-\eta}\,.
\end{align*}

It then follows that
\begin{align*}
 P\left(\Risk_{L,P}(\fc)-\inf_{f\in H}\RLP(f)\leq c_P c_\eta n^{-\min\{\rho,1/s\}}\right)\geq 1-4e^{-\eta}\,,
\end{align*}
for some constants $c_P$ that depends on $p$, $M$, $\vartheta$, $c$, $B$, $V$, and $K$ but is independent of $\eta$, and $c_\eta$ that depends only on $\eta$.
\end{proof}

\subsection{Misspecified Censoring Model}\label{subsec:misspecification}
In Section~\ref{subsec:consistency} we showed that under conditions
(B\ref{as:LocallyLipchitzClippable})--(B\ref{as:consistency_of_G}) the
clipped censored SVM learning method $\mathfrak{L}^c$ is
$\mathcal{P}$-universally consistent. While one can choose
the Hilbert space $H$ and the loss function $L$ in advance
such that conditions
(B\ref{as:LocallyLipchitzClippable})--(B\ref{as:dense}) hold,
condition~(B\ref{as:consistency_of_G}) need not hold when
the censoring mechanism is misspecified. In
the following, we consider this case.

Let $\hatG(t|z)$ be the estimator of the survival function
for the censoring variable. The deviation of $\hatG(t|z)$
from the true survival function $G(t|z)$ can be divided
into two terms. The first term is the deviation of the
estimator $\hatG(t|z)$ from its limit, while the second
term is the difference between the estimator limit and the
true survival function. More formally, let $G_P(t|z)$ be the limit of the
estimator under the probability measure $P$, and assume it
exists. Define the errors
\begin{align*}
    Err_n(t,z)=Err_{n1}(t,z)+Err_2(t,z)\equiv \Bigl(\hatG(t|z)-G_P(t|z)\Bigr)+\Bigl(G_P(t|z)-G(t|z)\Bigr)\,.
\end{align*}
Note that $Err_{n1}$ is a random function that depends on
the data, the estimation procedure, and the probability
measure $P$, while $Err_2$ is a fixed function that depends
only on the estimation procedure and the probability
measure $P$. When the model is correctly
specified, and the estimator is consistent, the second term
vanishes.

\begin{thm}\label{thm:misspecification}
Let $L$ be a loss function and $H$ be an RKHS of a bounded kernel over $\Z$. Assume (A\ref{as:positiveRisk})--(A\ref{as:T_independent_C}) and
(B\ref{as:LocallyLipchitzClippable})--(B\ref{as:dense}). Let
$\lambda_n\rightarrow 0$, where $0<\lambda_n<1$ and
$\lambda_n^{\max\{q/2,p\}}n\rightarrow \infty$. Then, for every fixed $\eps>0$,
\begin{align*}
    \lim_{n\rightarrow\infty} P\left(D\in (\Z\times\T\times\{0,1\})^n \,:\,\RLP(\clipfc)\leq
    \RLPS+\frac{3B}{K^2}\left|P(G_P-G)\right|+\eps\right)=1\,.
\end{align*}
\end{thm}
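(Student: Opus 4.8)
The plan is to reuse the proof of Theorem~\ref{thm:Puniversal_consistency} essentially verbatim, altering only the single step that invoked Assumption~(B\ref{as:consistency_of_G}). Since Theorem~\ref{thm:main} is a purely finite-sample statement valid for \emph{any} estimator $\hatG$, and in particular does not presuppose consistency for the true $G$, I may apply it directly with $f_0=\fp$ and $B_0=B+c_o(A_2(\lambda_n)/\lambda_n)^{q/2}$, exactly as in the derivation of~\eqref{eq:B0}. This yields, for any $\eta>0$ and $n\geq1$, with probability at least $1-3e^{-\eta}$,
\begin{align*}
\RLP(\clipfc)-\RLPS\leq 3A_2(\lambda_n)+3\left(\frac{72\tV\eta}{n}\right)^{1/(2-\vartheta)}+\frac{8B_0\eta}{5Kn}+\frac{3B}{K^2}\ep_n Err_n+W\left(\frac{a^{2p}}{\lambda_n^p n}\right)^{\frac1{2-p-\vartheta+\vartheta p}}.
\end{align*}

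By the computation already carried out in the proof of Theorem~\ref{thm:Puniversal_consistency}, every term on the right except $\tfrac{3B}{K^2}\ep_n Err_n$ is deterministic and tends to zero: $A_2(\lambda_n)\to0$ by Assumption~(B\ref{as:dense}) together with Lemma~5.15 of \citetalias{SVR}; the variance term $3(72\tV\eta/n)^{1/(2-\vartheta)}$ and the term $8B_0\eta/(5Kn)$ vanish (the latter because $\lambda_n^{q/2}n\to\infty$, which controls the growth of $B_0$); and the entropy term vanishes because $\lambda_n^{p}n\to\infty$. Hence, for each fixed $\eta$, I need only analyse the limiting behaviour of the censoring-error term under misspecification.

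This is the heart of the argument, and it is where the decomposition $Err_n=Err_{n1}+Err_2$ introduced above is used, giving $\ep_n Err_n=\ep_n Err_{n1}+\ep_n Err_2$. For the first summand, the defining property of $G_P$ as the limit of $\hatG$ is that $\|Err_{n1}\|_\infty\parrow0$; since $|\ep_n Err_{n1}|\leq\|Err_{n1}\|_\infty$, this summand is asymptotically negligible even though Assumption~(B\ref{as:consistency_of_G}) fails. For the second summand, $Err_2=G_P-G$ is a fixed, bounded, deterministic function, so $\ep_n Err_2=n^{-1}\sum_{i=1}^n Err_2(U_i,Z_i)$ is an average of i.i.d.\ bounded variables, and the weak law of large numbers gives $\ep_n Err_2\parrow P(G_P-G)$. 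Therefore $\ep_n Err_n\parrow P(G_P-G)$, and in particular, for every $\eps>0$, the event $\{\tfrac{3B}{K^2}\ep_n Err_n\leq\tfrac{3B}{K^2}|P(G_P-G)|+\tfrac\eps2\}$ has probability tending to one.

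To conclude, fix $\eps>0$ and $\eta>0$. For $n$ large the deterministic terms above sum to at most $\eps/2$, so intersecting the event of probability $\geq1-3e^{-\eta}$ on which the finite-sample bound holds with the law-of-large-numbers event just described gives $\RLP(\clipfc)\leq\RLPS+\tfrac{3B}{K^2}|P(G_P-G)|+\eps$ on an event of probability at least $1-3e^{-\eta}-o(1)$. Since the bounding event for $\clipfc$ does not depend on $\eta$, letting $n\to\infty$ and then $\eta\to\infty$ shows this probability converges to one, which is the assertion. The main obstacle is precisely the control of $\ep_n Err_n$ without~(B\ref{as:consistency_of_G}): one must use that the estimator still converges uniformly to its own (possibly incorrect) limit $G_P$ so that $\ep_n Err_{n1}$ vanishes, and one must retain the sign of $\ep_n Err_2$ so that its limit is the signed bias $P(G_P-G)$ rather than the larger $P|G_P-G|$; the absolute value in the statement then appears only because $P(G_P-G)$ may be negative while the left-hand side is a nonnegative excess risk.
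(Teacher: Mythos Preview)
Your proposal is correct and follows essentially the same route as the paper: apply the finite-sample bound of Theorem~\ref{thm:main} with $f_0=\fp$ and the $B_0$ from~\eqref{eq:B0}, let the deterministic terms vanish as in the proof of Theorem~\ref{thm:Puniversal_consistency}, and handle the censoring-error term by the decomposition $Err_n=Err_{n1}+Err_2$, using uniform convergence of $\hatG$ to $G_P$ for the first piece and the law of large numbers for the second. Your closing remark about retaining the sign of $\ep_n Err_2$ is a nice observation, though note that the appendix proof of Theorem~\ref{thm:main} actually carries $\ep_n|(\hatG-G)(T|Z)|$ with an absolute value, so strictly speaking the bound one inherits is $\tfrac{3B}{K^2}P|G_P-G|$; the paper's statement with the signed $P(G_P-G)$ (and your matching discussion) relies on reading Theorem~\ref{thm:main} literally rather than its proof.
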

\begin{proof}
By~\eqref{eq:bound_witA2}, for every fixed $\eta>0$ and $n\geq 1$,
\begin{align}\label{eq:bound_witA2_missp}
\begin{split}
&\lambda \nhs{\fc} +\RLP(\clipfc)-\RLPS\\
&\leq \left(3 A_2(\lambda_n) +3\left(\frac{72\tV\eta}{n}\right)^{1/(2-\vartheta)}
+\frac{8B_0\eta}{5Kn}+\frac{3B}{K^2}\|\hatG-G_P\|_{\infty} +W\left(\frac{a^{2p}}{\lambda^p n}\right)^{\frac1{2-p-\vartheta+\vartheta p}} \right)\\
&+\frac{3}{K^2}\ep_n Err_2\,,
\end{split}
\end{align}
for any fixed regularization constant $\lambda>0$, $n\geq 1$,
and $\eta>0$, with probability not less than $1-3e^{-\eta}$. Since $P(\|\hatG-G_P\|_{\infty}\geq b n^{-1/s})\rightarrow 0
$, it follows from the same arguments as in the proof of Theorem~\ref{thm:Puniversal_consistency}, that the first expression on the RHS of~\eqref{eq:bound_witA2_missp} converges in probability to zero. By the law of large numbers, $\ep_n Err_2\asarrow P(G_P-G)$, and the result follows.
\end{proof}

Theorem~\ref{thm:misspecification} proves that even under misspecification of the censored data model, the clipped censored learning method $\mathfrak{L}^c$ achieves the optimal risk up to a constant that depends on $P(G_P-G)$, which is the expected distance of the limit
of the estimator from the true distribution. If the
estimator estimates reasonably well, one can hope that
this term is small, even under misspecification.

We now show that the additional condition
\begin{align}\label{eq:additional_condition_misspecification}
P(\|\hatG-G_P\|_{\infty}\geq b n^{-1/s})\rightarrow 0
\end{align}
of Theorem~\ref{thm:misspecification} holds for both the
Kaplan-Meier estimator and the Cox model estimator.

\begin{example}\textbf{Kaplan-Meier estimator:} Let $\hatG$ be the Kaplan-Meier estimator of $G$. Let $G_P$
be the limit of $\hatG$. Note that $G_P$ is the
marginal distribution of the censoring variable. It follows
from~\eqref{eq:KM_exp_bound} that
condition~\eqref{eq:additional_condition_misspecification}
holds for all $s>2$.
\end{example}

\begin{example}\textbf{Cox model estimator:}\label{ex:Cox_Convergence}
Let $\hatG$ be the estimator of $G$ when the Cox model is
assumed (see Example~\ref{ex:ph}). Let $G_P$ be the limit
of $\hatG$. It has been shown that the limit $G_P$ exists,
regardless of the correctness of the proportional hazards model \citep{Cox}.
Moreover, for all $\eps>0$, and all $n$ large enough,
\begin{align*}
   P( \|\hatG-G_P\|>\eps)\leq  \exp\{-W_1 n \eps^2+W_2\sn\eps\}\,,
\end{align*}
where $W_1$, $W_2$ are universal constants that depend on
the set $\Z$, the variance of $Z$, the constants $K$ and
$K_S$, but otherwise do not depend on the distribution
$P$ \citep[see][Theorem 3.2, and conditions therein]{Cox}.
Fix $\eta>0$ and write
\begin{align*}
\eps =\frac{\sqrt{\eta+W_2^2}+W_2}{2W_1\sn} \,.
\end{align*}
Some algebraic manipulations then yield
\begin{align*}
  \limsup_{n\rightarrow\infty}  P\left( \|\hatG-G_P\|_{\infty}>\frac{\sqrt{W_1\eta}+W_2}{W_1\sn}\right)&<e^{-\eta}\,.
\end{align*}
Hence,
condition~\eqref{eq:additional_condition_misspecification}
holds for all $s>2$.
\end{example}

\section{Simulation Study}\label{sec:simulation}
\begin{figure}[t!]
\begin{center}
\includegraphics[width=0.8\textwidth]{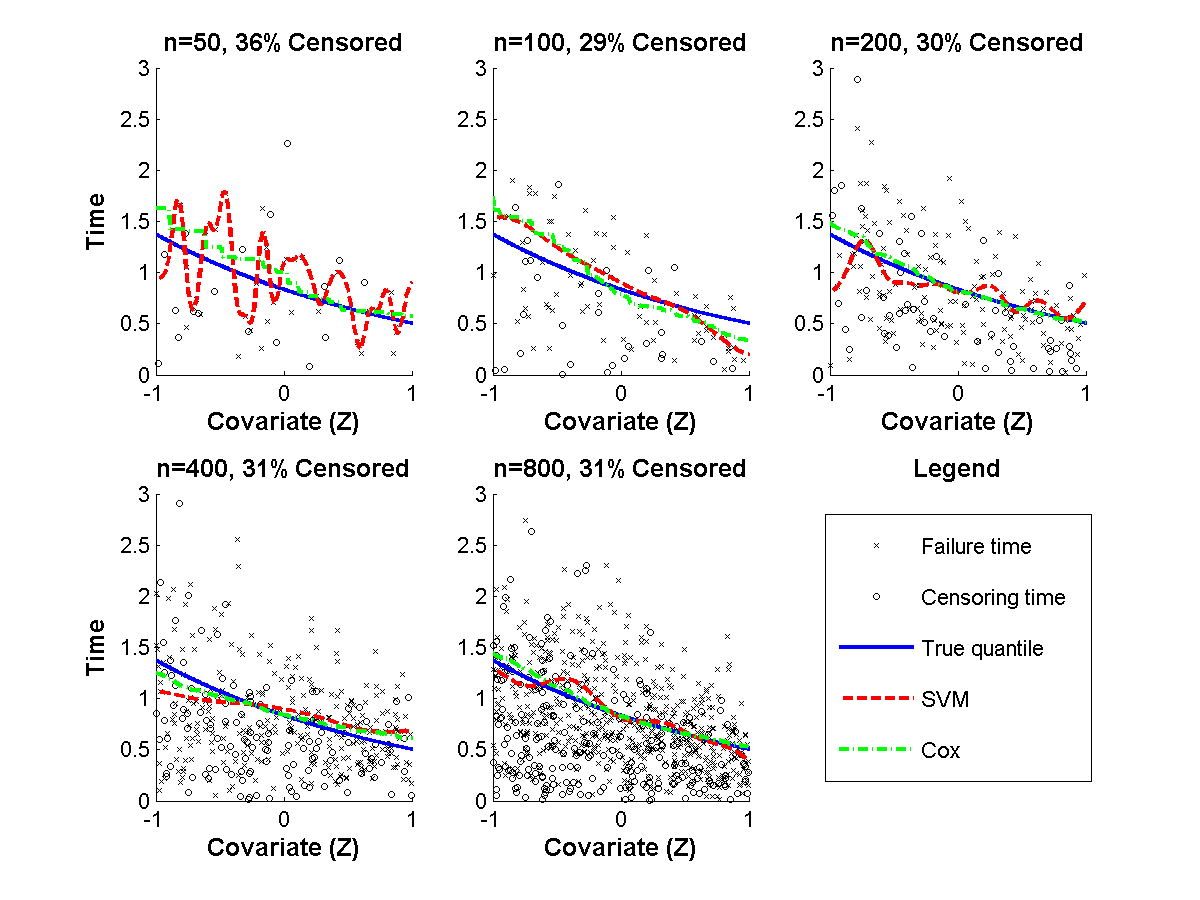}
\caption{Weibull failure time, proportional hazards
(Setting~1): The true conditional median (solid blue), the
SVM decision function (dashed red), and the Cox regression
median (dot-dashed green) are plotted for samples of size
$n=50,100,200,400$ and $800$. The censoring percentage is
given for each sample size. An observed failure times is
represented by an $\times$, and an observed censoring time is
represented by an $\circ$.} \label{fig:Weibull_fig}
\end{center}
\end{figure}

\begin{figure}[t!]
\begin{center}
\includegraphics[width=0.8\textwidth]{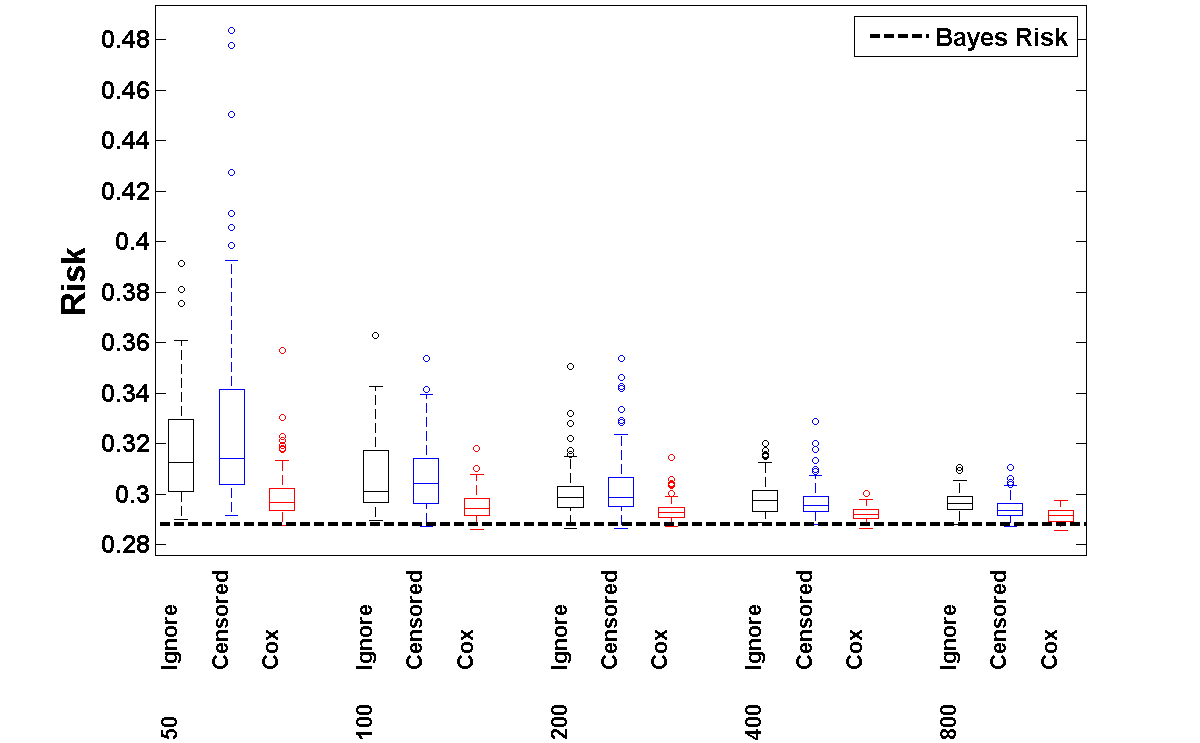}
\caption{Weibull failure time, proportional hazards
(Setting~1): Distribution of the risk for different sizes of data set, for standard SVM that ignores the censored observations
(Ignore), for censored SVM (Censored), and for the Cox
regression median (Cox). Bayes risk is denoted by a black dashed line. Each box plot is based on 100
repetitions of the simulation for each size of data set.} \label{fig:Weibull_boxplot}
\end{center}
\end{figure}

\begin{figure}[t!]
\begin{center}
\includegraphics[width=0.8\textwidth]{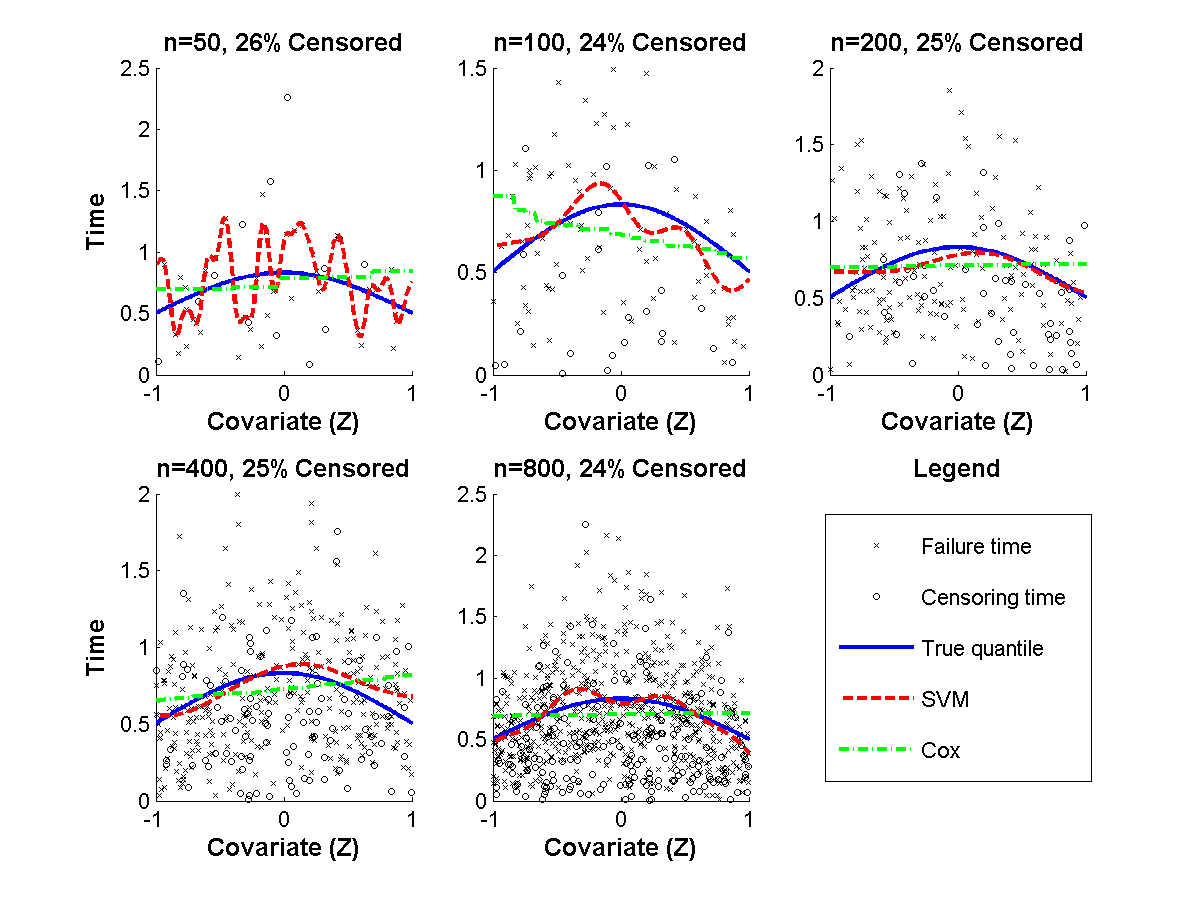}
\caption{Weibull failure time, non-linear proportional hazards (Setting~2): The true
conditional median (solid blue), the SVM decision function
(dashed red), and the Cox regression median (dot-dashed green)
are plotted for samples of size $n=50,100,200,400$ and $800$.
The censoring percentage is given for each sample size. An
observed failure times is represented by an $\times$, and
an observed censoring time is represented by an $\circ$.}
\label{fig:Weibull2_fig}
\end{center}
\end{figure}

\begin{figure}[t!]
\begin{center}
\includegraphics[width=0.8\textwidth]{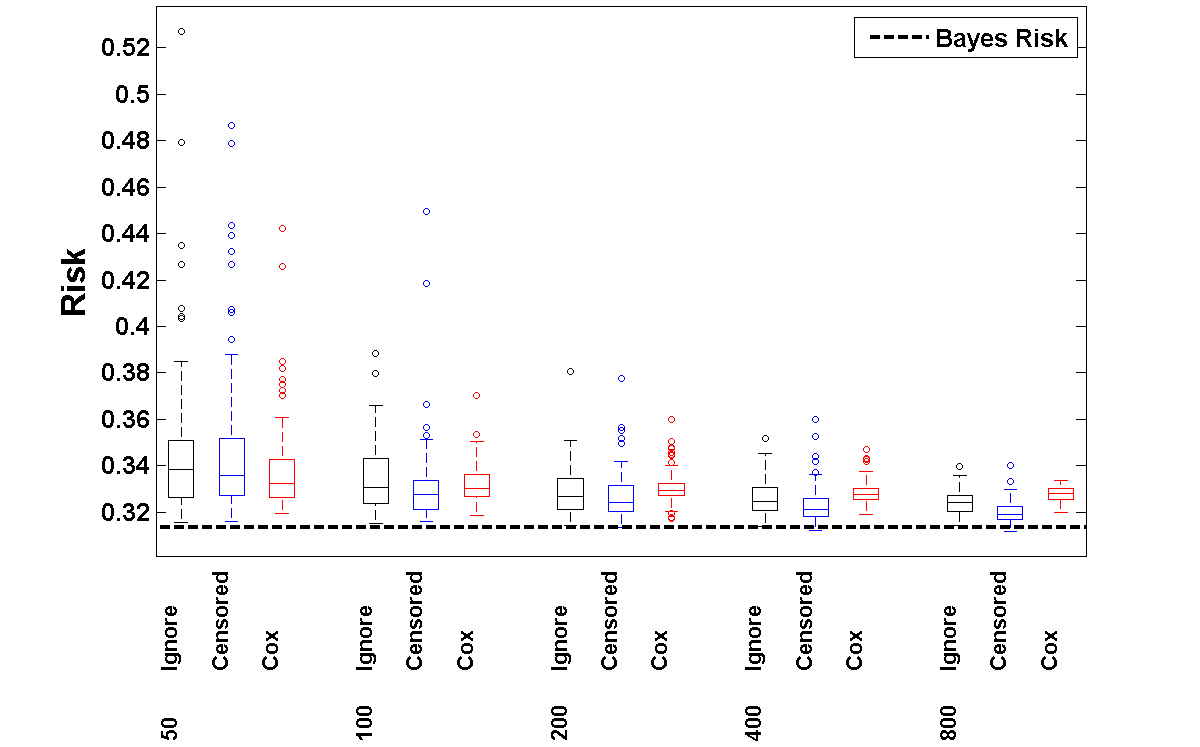}
\caption{Weibull failure time, non-linear proportional hazards (Setting~2): Distribution of the risk for different sizes of data set, for standard SVM that ignores the censored observations
(Ignore), for censored SVM (Censored), and for the Cox
regression median (Cox). Bayes risk is denoted by a black dashed line. Each box plot is based on 100
repetitions of the simulation for each size of data set.} \label{fig:Weibull2_boxplot}
\end{center}
\end{figure}

\begin{figure}[t!]
\begin{center}
\includegraphics[width=0.8\textwidth]{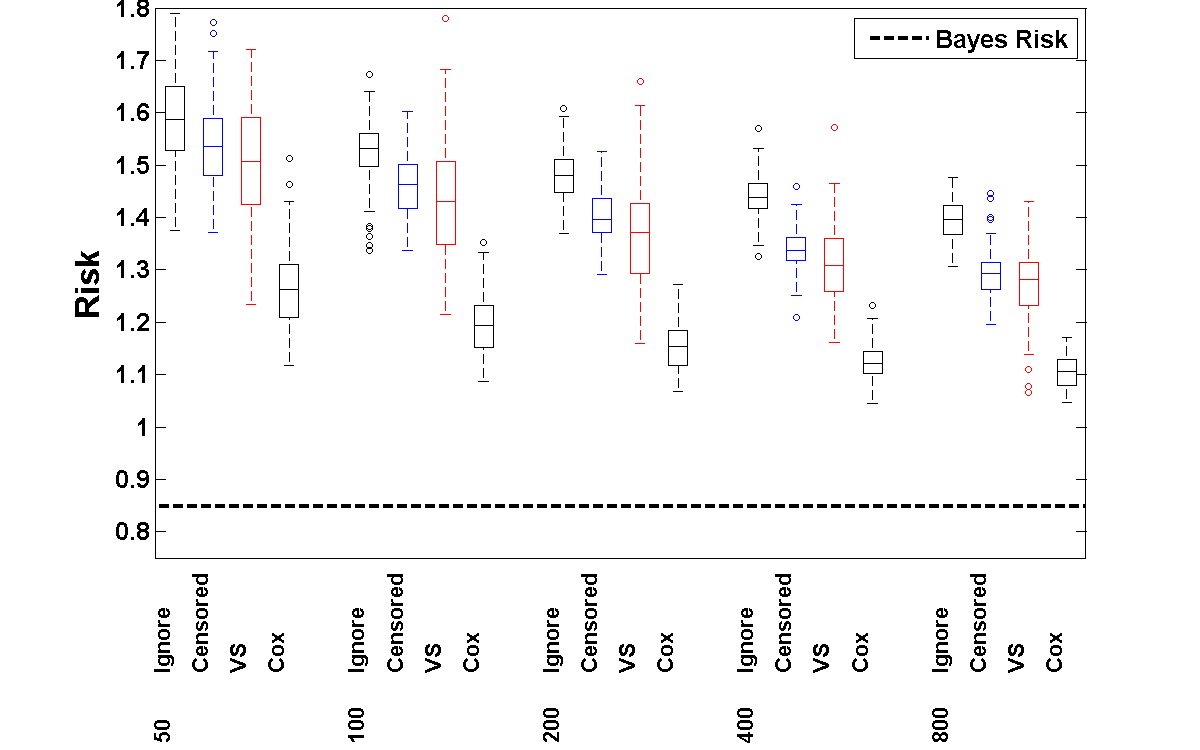}
\caption{Multidimensional Weibull failure time (Setting~3): Distribution of the risk for different data set
sizes, for standard SVM that ignores the censored observations
(Ignore), for censored SVM (Censored), for censored SVM with variable selection (VS), and for the Cox
regression median (Cox). Bayes risk is denoted by a black dashed line. Each box plot is based on 100
repetitions of the simulation for each size of data set.} \label{fig:Weibull_10D_boxplot}
\end{center}
\end{figure}
\begin{figure}[t!]
\begin{center}
\includegraphics[width=0.8\textwidth]{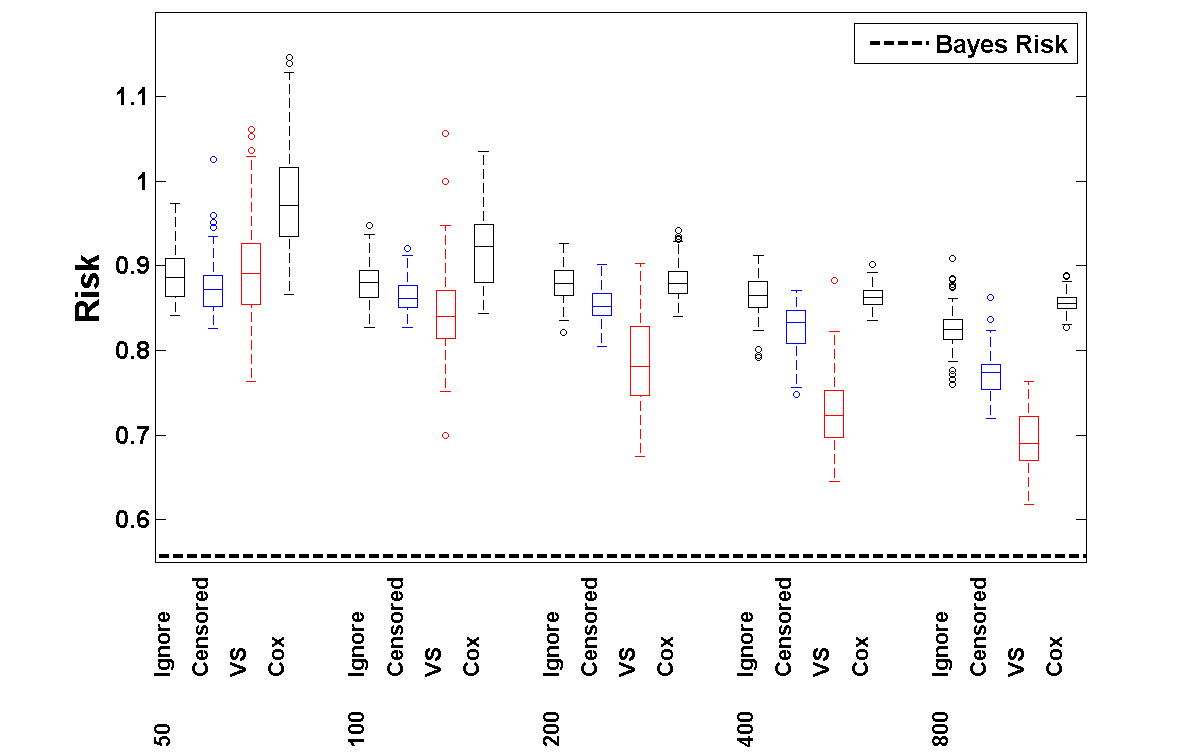}
\caption{Multidimensional Weibull failure time, non-linear proportional hazards (Setting~4): Distribution of the risk for different data set
sizes, for standard SVM that ignores the censored observations
(Ignore), for censored SVM (Censored), for censored SVM with variable selection (VS), and for the Cox
regression median (Cox). Bayes risk is denoted by a black dashed line. Each box plot is based on 100
repetitions of the simulation for each given data set
size.} \label{fig:Weibull2_10D_boxplot}
\end{center}
\end{figure}

\begin{figure}[t!]
\begin{center}
\includegraphics[width=0.8\textwidth]{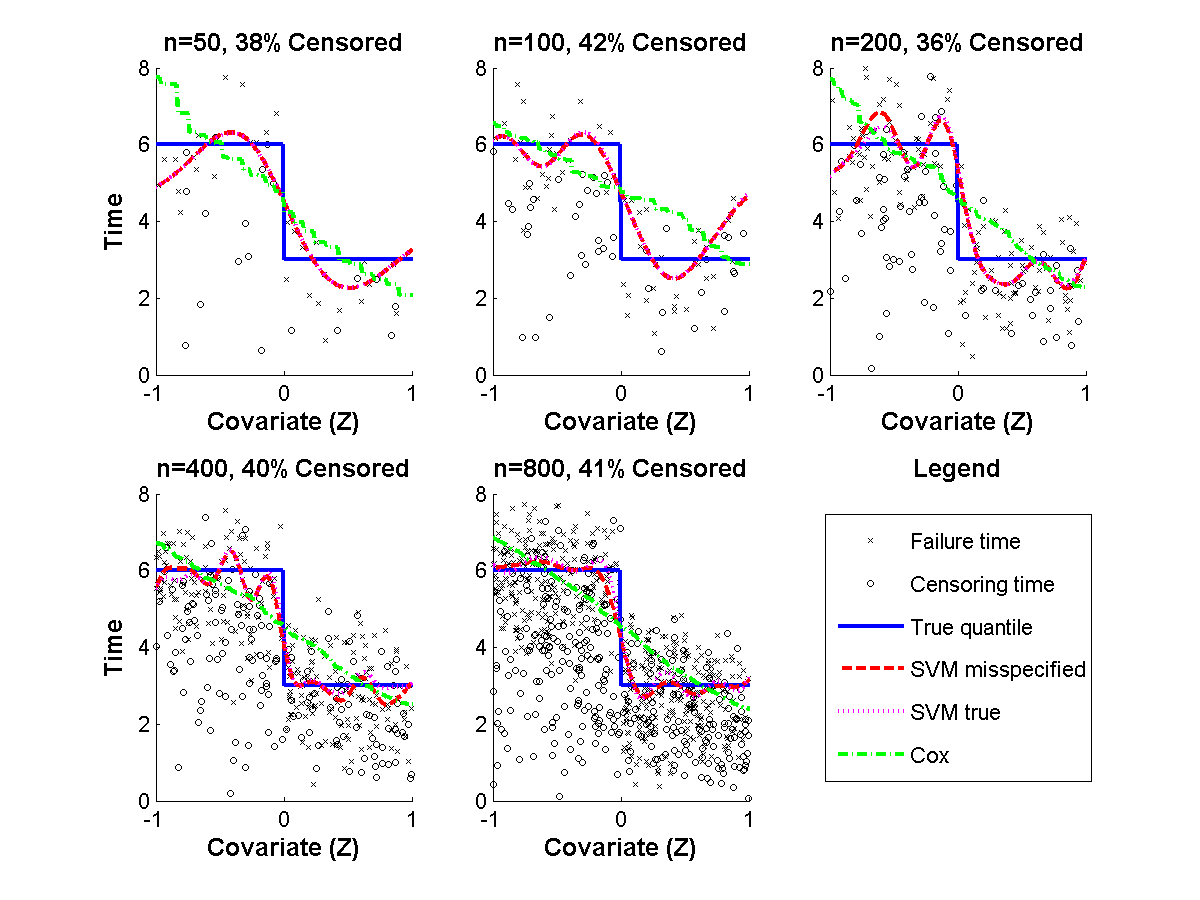}
\caption{Step function median, Weibull censoring time (Setting~5): The true
conditional median (solid blue), the SVM decision function
using the Kaplan-Meier estimator for the censoring (dashed red), the SVM decision function
using the Cox estimator for censoring (doted magenta), and the Cox regression median (dot-dashed green)
are plotted for samples of size $n=50,100,200,400$ and $800$.
The censoring percentage is given for each sample size. An
observed failure times is represented by an $\times$, and
an observed censoring time is represented by an $\circ$.}
\label{fig:Jump_fig}
\end{center}
\end{figure}

\begin{figure}[t!]
\begin{center}
\includegraphics[width=\textwidth]{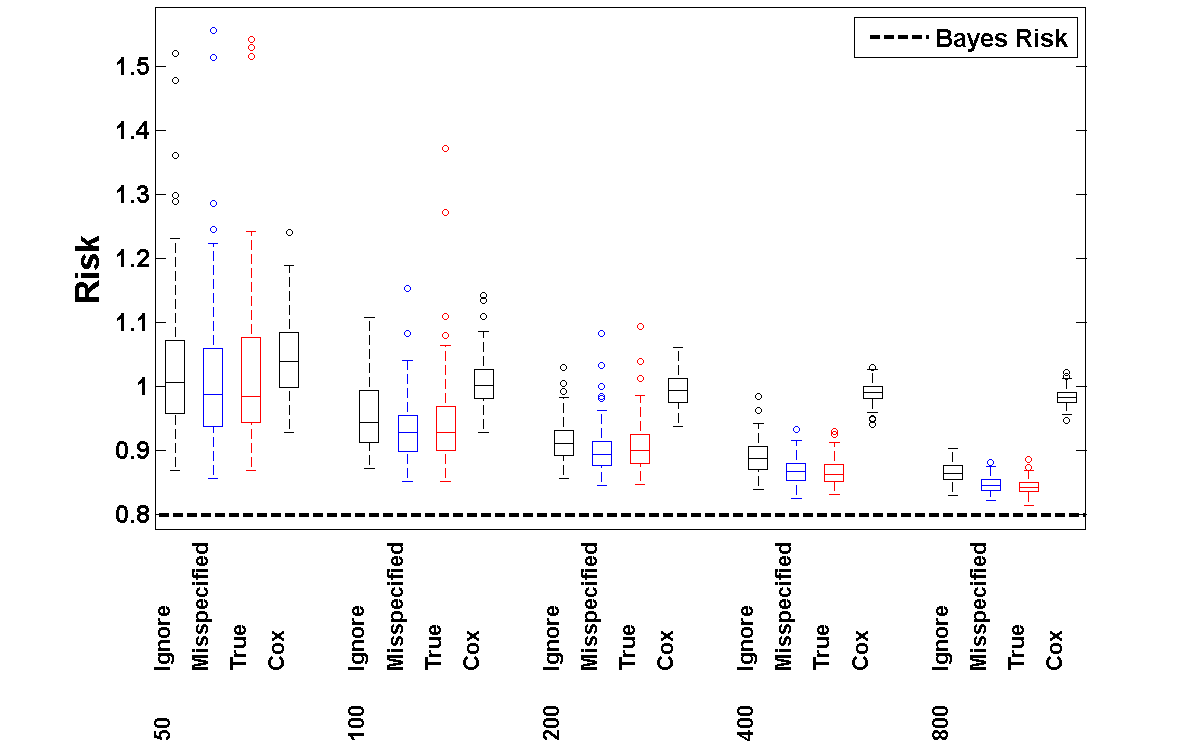}
\caption{Step function median, Weibull censoring time
(Setting~5): Distribution of the risk for different sizes of data
set, for standard SVM that ignores the censored
observations (Ignore), for censored SVM with the Kaplan-Meier
estimator for censoring (Misspecified), for censored
SVM with the Cox estimator for censoring (True), and for the
Cox regression median (Cox). The Bayes risk is denoted by a
black dashed line. Each box plot is based on 100
repetitions of the simulation for each size of data set.} \label{fig:Jump_boxplot}
\end{center}
\end{figure}

In this section we illustrate the use of the censored SVM
learning method proposed in Section~\ref{sec:censoring} via
a simulation study. We consider five different data-generating mechanisms, including one-dimensional and
multidimensional settings, and different types of censoring
mechanisms. We compute the censored SVM decision function
with respect to the absolute deviation loss function
$L_{\mathrm{AD}}$. For this loss function, the Bayes risk is given by
the conditional median (see Example~\ref{ex:median}). We
choose to compute the conditional median and not the
conditional mean, since censoring prevents reliable
estimation of the unrestricted mean survival time when no
further assumptions on the tail of the distribution are
made (see discussions in
\citealp{Karrison97,Zucker98,Chen01}). We compare the
results of the SVM approach to the results obtained by the
Cox model and to the Bayes risk. We test the effects of ignoring
the censored observations. Finally, for multidimensional
examples, we also check the benefit of variable selection.

The algorithm presented in Section~\ref{sec:censoring} was
implemented in the Matlab environment. For the
implementation we used the Spider library for
Matlab\footnote{The Spider library for Matlab can be
downloaded form
\url{http://www.kyb.tuebingen.mpg.de/bs/people/spider/}}.
The Matlab code for both the algorithm and the simulations
can be found in \ref{sec:suppA}. The distribution of the
censoring variable was estimated using the Kaplan-Meier
estimator (see Example~\ref{ex:KM}). We used the Gaussian
RBF kernel
$k_{\sigma}(x_1,x_2)=\exp(\sigma^{-2}\|x_1-x_2\|_2^2)$,
where the width of the kernel $\sigma$ was chosen using
cross-validation. Instead of minimizing the regularized
problem \eqref{eq:svm_censored_decision_function}, we solve
the equivalent problem \citepalias[see][Chapter~5]{SVR}:
\begin{align*}
    \text{Minimize }\Risk_{L^n_D,D}(f)  \text{ under the constraint } \|f\|_H^2<\lambda^{-1}\,,
\end{align*}
where $H$ is the RKHS with respect to the kernel $k_{\sigma}$, and $\lambda$ is some constant chosen using
cross-validation. Note that there is no need to compute the norm of the function $f$ in the RKHS space
$H$ explicitly. The norm can be obtained using the kernel matrix $K$
with coefficients $ k_{ij}= k(Z_i, Z_j)$
\citepalias[see][Chapter~11]{SVR}. The risk of the
estimated functions was computed numerically, using a
randomly generated data set of size $10000$.

In some simulations the failure time is distributed
according to the Weibull distribution \citep{Lawless03}.
The density of the Weibull distribution is given by
\begin{align*}
f(t)=
\frac{\kappa}{\rho}\left(\frac{t}{\rho}\right)^{\kappa-1}e^{-(t/\rho)^{\kappa}} \indi{t\geq 0}\,,
\end{align*}
where $\kappa>0$ is the shape parameter and $\rho>0$ is the
scale parameter. Assume that $\kappa$ is fixed and that
$\rho=\exp(\beta_0+\beta'Z)$, where $\beta_0$ is a constant, $\beta$ is the coefficient vector, and $Z$ is the covariate vector.
In this case, the
failure time distribution follows the proportional hazards
assumption, i.e., the hazard rate is given by $h(t|Z)=
\exp(\beta_0+\beta'Z)d\Lambda(t)$, where $\Lambda(t)=t^\kappa$. When the
proportional hazards assumption holds, estimation based on
Cox regression is consistent and efficient (see
Example~\ref{ex:ph}; note that the distribution discussed there is
of the censoring variable and not of the failure time,
nevertheless, the estimation procedure is similar). Thus, when the
failure time distribution follows the proportional hazards
assumption, we use the Cox regression as a benchmark.

In the first setting, the covariates $Z$ are generated
uniformly on the segment $[-1,1]$. The failure time follows
the Weibull distribution with shape parameter $2$ and
scale parameter $-0.5 Z$. Note that the proportional hazards
assumption holds. The censoring variable $C$ is distributed
uniformly on the segment $[0,c_0]$ where the constant $c_0$
is chosen such that the mean censoring percentage is
$30\%$. We used $5$-fold-cross-validation to choose the
kernel width and the regularization constant among the set
of pairs
\begin{align*}
 (\lambda^{-1},\sigma)=(0.1\cdot 10^{i},0.05\cdot 2^{j})\,,\qquad i,j\in\{0,1,2,3\}\,.
\end{align*}
We repeated the simulation $100$ times for each of the sample sizes $50,100,200,400$, and $800$.

In Figure~\ref{fig:Weibull_fig}, the conditional median
obtained by the censored SVM learning method and by Cox
regression are plotted. The true median is plotted as a
reference. In Figure~\ref{fig:Weibull_boxplot}, we compare
the risk of the SVM method to the median of the
survival function obtained by Cox regression (to which we
refer as the Cox regression median). We also examined the effect
of ignoring the censored observations by computing the
standard SVM decision function for the data set in which
all the censored observations were deleted. Both figures
show that even though the SVM does not use the proportional
hazards assumption for estimation, the results are
comparable to those of Cox regression, especially for
larger sample sizes. Figure~\ref{fig:Weibull_boxplot} also
shows that there is a non-negligible price for ignoring the
censored observations.

The second setting differs from the first setting only
in the failure time distribution. In the second setting the
failure time distribution follows the Weibull distribution
with scale parameter $-0.5 Z^2$. Note that the proportional
hazards assumption holds for $Z^2$, but not for the original
covariate $Z$. In
Figure~\ref{fig:Weibull2_fig}, the true, the SVM median, and the
Cox regression median are plotted. In
Figure~\ref{fig:Weibull2_boxplot}, we compare the risk of
SVM to that of Cox regression. Both figures show that
in this case SVM does better than Cox regression.
Figure~\ref{fig:Weibull2_boxplot} also shows the price of
ignoring censored observations.

The third and forth settings are generalizations of the
first two, respectively, to 10-dimensional covariates. The covariates $Z$
are generated uniformly on $[-1,1]^{10}$. The failure time
follows the Weibull distribution with shape parameter $2$.
The scale parameter of the third and forth settings are
$-0.5 Z_1+2Z_2-Z_3$ and $-0.5 (Z_1)^2+2(Z_2)^2-(Z_3)^2$,
respectively. Note that these models are sparse, namely, they
depend only on the first three variables. The censoring
variable $C$ is distributed uniformly on the segment
$[0,c_0]$, where the constant $c_0$ is chosen such that the
mean censoring percentage is $40\%$. We used
$5$-fold-cross-validation to choose the kernel width and
the regularization constant among the set of pairs
\begin{align*}
 (\lambda^{-1},\sigma)=(0.1\cdot 10^{i},0.2\cdot 2^{j})\,,\qquad i,j\in\{0,1,2,3\}\,.   \end{align*}

The results for the third and the forth settings appears in
Figure~\ref{fig:Weibull_10D_boxplot} and
Figure~\ref{fig:Weibull2_10D_boxplot}, respectively. We
compare the risk of standard SVM that ignores censored
observations, censored SVM, censored SVM with variable
selection, and Cox regression. We performed variable
selection for censored SVM based on recursive feature
elimination as in \citet[Section~2.6]{Guyon2002}. When the proportional hazards assumption holds (Setting~3), SVM
performs reasonably well, although the Cox model performs better as expected. When the proportional hazard
assumption fails to hold (Setting~4), SVM performs better
and it seems that the risk of Cox regression converges, but not
to the Bayes risk (see Example~\ref{ex:Cox_Convergence} for discussion). Both
figures show that variable selection achieves a slightly
smaller median risk with the price of higher variance and that
ignoring the censored observations leads to higher risk.

In the fifth setting, we consider a non-smooth conditional
median. We also investigate the influence of using a
misspecified model for the censoring mechanism. The
covariates $Z$ are generated uniformly on the segment
$[-1,1]$. The failure time is normally distributed with
expectation $3+3\indi{Z<0}$ and variance $1$. Note that the
proportional hazards assumption does not hold for the failure time. The censoring variable $C$ follows the Weibull distribution with shape
parameter $2$, and scale parameter $-0.5 Z +log(6)$ which results in mean censoring percentage of $40\%$. Note that for this
model, the censoring is independent of the failure time only
given the covariate $Z$ (see Assumption~(A\ref{as:T_independent_C})). Estimation of the
censoring distribution using the Kaplan-Meier corresponds
to estimation under a misspecified model. Since the censoring
follows the proportional hazards assumption, estimation using the Cox
estimator corresponds to estimation under the true model. We
use $5$-fold-cross-validation to choose the regularization constant and the width of the kernel, as in setting~1.

In Figure~\ref{fig:Jump_fig}, the conditional median
obtained by the censored SVM learning method using both the
misspecified and true model for the censoring, and by Cox
regression, are plotted. The true median is plotted as a
reference. In Figure~\ref{fig:Jump_boxplot}, we compare the
risk of the SVM method using both the misspecified and true
model for the censoring. We also examined the effect of
ignoring the censored observations. Both figures show that
in general SVM does better than the Cox model, regardless
of the censoring estimation. The difference between the
misspecified and true model for the censoring is small and
the corresponding curves in Figure~\ref{fig:Jump_fig}
almost coincide. Figure~\ref{fig:Jump_boxplot} shows
again that there is a non-negligible price for ignoring the
censored observations.

\section{Concluding Remarks}\label{sec:summary}
We studied an SVM framework for right censored data. We
proposed a general censored SVM learning method and showed
that it is well defined and measurable. We derived finite
sample bounds on the deviation from the optimal risk. We
proved risk consistency and computed learning rates. We
discussed misspecification of the censoring model.
Finally, we performed a simulation study to demonstrate the censored SVM method.

We believe that this work illustrates an important approach
for applying support vector machines to right censored
data, and to missing data in general. However, many open
questions remain and many possible generalizations exist.
First, we assumed that censoring is independent of
failure time given the covariates, and the probability that
no censoring occurs is positive given the covariates. It
should be interesting to study the consequences of
violation of one or both assumptions. Second, we have used
the inverse-probability-of-censoring weighting to correct
the bias induced by censoring. In general, this is not
always the most efficient way of handling missing data
\citep[see, for example,][Chapter~25.5]{VDV98}. It would be
worthwhile to investigate whether more efficient methods could
be developed. Third, we
discussed only right-censored data and not general missing
mechanisms. We believe that further development of SVM
techniques that are able to better utilize the data and to
perform under weaker assumptions and in more general settings
is of great interest.

\appendix
\section{Proofs}
\subsection{Auxiliary Results}
The following result is due to~\citetalias{SVR} and is used to prove Theorem~\ref{thm:main}. Since it is not stated as a result there, we state the result and sketch the proof.

\begin{thm}\label{thm:SVR7}
Let $L$ be a loss function and $H$ be an RKHS that satisfies assumptions (B\ref{as:LocallyLipchitzClippable})--(B\ref{as:entropy_bound1}). Let $f_0$ be such that $\|L(z,y,f_0(z))\|_{\infty}\leq B_0$ for some $B_0\geq B$. Fix $\lambda>0$ and $\eta>0$, and let $f\in H$. Then for all $n\geq 72\eta$, with probability not less than $1-e^{\eta}$,
\begin{align*}
&  (P-\ep_n)(L\circ\clip{f}-L\circ\fps)<
\\
&\qquad \frac{17}{27}\left(\lambda\|f\|^2_H+P(L\circ\clip{f}-L\circ\fps)+ \left(\frac{72V\eta}{n}\right)^{\frac{1}{2-\vartheta}}\right)+
  W\left(\frac{a^{2p}}{\lambda^p n}\right)^{\frac1{2-p-\vartheta+\vartheta p}}\,,
\end{align*}
where $W>1$ is a constant that depends only on $p$, $M$, $\vartheta$, and $V$, but not on $f$.
\end{thm}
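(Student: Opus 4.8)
The plan is to recognize this as the empirical-process estimate that underpins the SVM oracle inequality of \citetalias{SVR} (the proof of their Theorem~7.20), specialized to clipped loss differences, and to assemble it from three standard pieces. Write $g_f=L\circ\clip{f}-L\circ\fps$ for the centered loss difference. Since $L$ can be clipped at $M$, I may take $\fps$ to have values in $[-M,M]$ (clipping does not increase its risk, so its clipped version is again Bayes); then by the supremum bound in (B\ref{as:LocallyLipchitzClippable}) both $L\circ\clip{f}$ and $L\circ\fps$ lie in $[0,B]$, so $\|g_f\|_\infty\le B$. The variance bound (B\ref{as:var_bound}) gives $Pg_f^2\le V(Pg_f)^\vartheta$ directly. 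These are exactly the supremum and variance controls that the concentration machinery requires.

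Next I would bound the complexity of the family $\{g_f:\nh{f}\le r\}$ for each radius $r$. Because $L$ is locally Lipschitz and clipping is $1$-Lipschitz, the map $f\mapsto g_f$ is Lipschitz from $(H,\nh{\cdot})$ into $L_2(\ep_n)$ on the relevant range, so the entropy bound (B\ref{as:entropy_bound1}) on $P[e_i(\mathrm{id}:H\to L_2(\ep_n))]$ transfers to the class $\{g_f\}$ and feeds into the entropy-integral estimate of \citetalias{SVR} (Theorem~7.16). This produces a bound on the expected supremum of $(P-\ep_n)g_f$ over the ball whose dependence on $a$, $\lambda$ (through $\nh{f}$), and $n$ is exactly what yields the final complexity term. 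The important point is that (B\ref{as:entropy_bound1}) is phrased for the \emph{expected empirical} entropy $P[e_i(\cdot)]$, which is precisely the object \citetalias{SVR}'s entropy-averaging results are built to consume, so no passage to uniform covering numbers is needed.

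I would then upgrade this expectation bound to a high-probability statement through Talagrand's concentration inequality as used in Chapter~7 of \citetalias{SVR}, which exploits $\|g_f\|_\infty\le B$ together with the variance bound to produce the $(72V\eta/n)^{1/(2-\vartheta)}$ and $B\eta/n$-type deviations; the restriction $n\ge 72\eta$ enters at this step. A peeling argument over the size of $\lambda\nhs{f}+Pg_f$ then makes the estimate uniform over $f\in H$ — which is what later allows it to be applied to the data-dependent minimizer $\fc$ — and a weighted Young inequality splits the resulting products to yield the factor $\tfrac{17}{27}$ in front of $\lambda\nhs{f}+Pg_f+(72V\eta/n)^{1/(2-\vartheta)}$ and the additive term $W(a^{2p}/(\lambda^p n))^{1/(2-p-\vartheta+\vartheta p)}$. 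Note that the reference function $f_0$ and the bound $B_0\ge B$ never appear in the displayed inequality; they are inherited from the hypotheses only because this estimate is consumed, with the same notation, in the companion application (Theorem~\ref{thm:main}).

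Since the analytic content is an assembly of \citetalias{SVR} results, I expect the main obstacle to be purely bookkeeping: tracking the constants $\tfrac{17}{27}$ and $72$ through the peeling and concentration steps while keeping $W$ independent of both $f$ and $P$, checking that the $\vartheta$-dependent exponents combine exactly as claimed, and verifying that the union bound over the peeling shells does not erode the stated probability $1-e^{-\eta}$ for $n\ge 72\eta$.
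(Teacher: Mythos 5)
Your proposal follows essentially the same route as the paper's proof, which is itself a sketch assembled from Theorems~7.16, 7.20 and~7.23 of Steinwart--Christmann: supremum and variance bounds for the clipped loss differences, the expected empirical entropy assumption feeding a bound on the expected supremum, and Talagrand's inequality, with $n\ge 72\eta$ entering exactly where you place it and the constants $\tfrac{17}{27}$ and $72$ emerging from the same bookkeeping. The one mechanical difference is that where you propose peeling over shells of $\lambda\nhs{f}+Pg_f$ with a union bound, the paper follows SC08 in normalizing the process by the regularized excess risk (the functions $g_{f,r}$) and applying Talagrand once at a single well-chosen radius $r$, so uniformity over $f\in H$ comes for free and no union bound can erode the probability $1-e^{-\eta}$.
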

\begin{proof}
The proof is based on the proofs of Theorems~17.16, 17.20, and~17.23 of \citetalias{SVR}. We now present a sketch of the proof for completeness.

We first note that if $a^{2p}> \lambda^pn$, it follows from~\eqref{eq:sup_bound} that the bound holds for $W\geq 4B$. Thus, we consider the case in which $a^{2p}\leq \lambda^pn$.

Let
\begin{align*}
  r^*=\inf_{f\in H} \lambda\|f\|^2_H+\RLP(\clip{f})-\RLPS\,.
\end{align*}
For every $r> r^*$, write
\begin{align*}
  \mathcal{F}_r=&\left\{f\in H, \lambda\|f\|^2_H+\RLP(\clip{f})-\RLPS\leq r\right\}\,,
  \\
  \mathcal{H}_r=&\left\{L\circ f-L\circ \fps, f\in \mathcal{F}_r\right\}\,.
\end{align*}
Define
\begin{align*}
  g_{f,r}=\frac{Ph_{\clip{f}}-h_{\clip{f}}}{\lambda\|f\|^2_H+\RLP(\clip{f})-\RLPS}\,,\qquad f\in H,\;r>r^*\,.
\end{align*}
Note that for every $f\in H$, $\|g_{f,r}\|_{\infty}\leq 2Br^{-1}$. It can be shown \citepalias[Eq.~7.43 and the discussion there]{SVR} that $Pg_{f,r}^2\leq Vr^{\vartheta-2}$. Using Talagrand's inequality \citepalias[Theorem~7.5]{SVR} we obtain
\begin{align}\label{eq:talagrand_g_fr}
  P\left(\sup_{f\in H}\ep_n g_{f,r}\leq(1+\gamma)P[\sup_{f\in H}| \ep g_{f,r}|]+\sqrt{\frac{2\eta Vr^{\vartheta-2}}{n}}+\left(\frac23+\frac1\gamma\right)\frac{2\eta B}{nr} \right)\geq 1-e^{-\eta}
\end{align}
for every fixed $\gamma>0$. Using Assumption~(A\ref{as:entropy_bound1}), it can be shown that there is a constant $\tilde W$ that depends only on $p$, $M$, $\vartheta$, and $V$, such that for every $r>\tilde W\left(\frac{a^{2p}}{\lambda^p n}\right)^{\frac1{2-p-\vartheta+\vartheta p}} $
\begin{align}\label{eq:complexity_for_g_fr}
  P[\sup_{f\in H}| \ep g_{f,r}|]\leq \frac{8}{30}
\end{align}
\citepalias[see proofs of Theorems~7.20 and~7.23 of][for details]{SVR}.
Substituting $\gamma=1/4$ in~\eqref{eq:talagrand_g_fr}, and using the bound~\eqref{eq:complexity_for_g_fr}, we obtain that with probability of not less than $1-e^{\eta}$,
\begin{align}\label{eq:bound_g_rf}
  \sup_{f\in H}\ep_n g_{f,r}\leq \frac13+\sqrt{\frac{2\eta Vr^{\vartheta-2}}{n}}+\frac{28\eta B}{3nr}
\end{align}
for all $r>\tilde W\left(\frac{a^{2p}}{\lambda^p n}\right)^{\frac1{2-p-\vartheta+\vartheta p}} $.

Using the fact that $n\geq 72\eta$, some algebraic manipulations \citepalias[see][proof of Theorem~7.23 for details]{SVR} yield that for all
$r\geq \left(\frac{72V\eta}{n}\right)^{1/(2-\vartheta)}$
\begin{align}\label{eq:bound_talagrand_g_rf_terms}
  \sqrt{\frac{2\eta Vr^{\vartheta-2}}{n}}\leq \frac16\qquad\,,\qquad \frac{28\eta B}{3nr}\leq \frac{7}{54}\,.
\end{align}
Fix $f\in H$. Using the definition of $g_{f,r}$, together with the estimates in~\eqref{eq:bound_talagrand_g_rf_terms} for the probability bound~\eqref{eq:bound_g_rf}, we obtain that for
\begin{align*}
  r=\tilde W\left(\frac{a^{2p}}{\lambda^p n}\right)^{\frac1{2-p-\vartheta+\vartheta p}}+\left(\frac{72V\eta}{n}\right)^{1/(2-\vartheta)}
\end{align*}
the inequality
\begin{align*}
  (P-\ep_n)(L\circ\clip{f}-L\circ\fps)<\frac{17}{27}\left(\lambda\|f\|^2_H+P(L\circ\clip{f}-L\circ\fps)+r\right)
\end{align*}
holds with probability not less than $1-e^{\eta}$, and the desired result follows.
\end{proof}

\subsection{Proof of Theorem~\ref{thm:main}}\label{sec:thm_main}
\begin{proof}[Proof of Theorem~\ref{thm:main}]
Note that  by the definition of $\fc$,
\begin{align*}
    \lambda\nhs{\fc}+\RLND(\clipfc)\leq     \lambda\nhs{f_0}+\RLND(f_0),
\end{align*}
where $\RLND(f)=\ep_n\delta L(Z,Y(U),f(Z))/\hatG(U|Z)$.
Hence,
\begin{align}\label{eq:boundABCD}
\begin{split}
&\lambda\nhs{\fc}+\RLP(\clipfc)-\RLPS\\
 \leq& \lambda\nhs{f_0}+\RLND(f_0)-\RLND(\clipfc)+\RLP(\clipfc)-\RLPS\\
 =&\left(\lambda\nhs{f_0}+\RLP(f_0) -\RLPS\right)+ \left(\RLND(f_0)-\RLNG(f_0)\right)\\
&\,+ \left(\RLNG(f_0) -\RLP(f_0)+\RLP(\clipfc)-\RLNG(\clipfc)\right)+\left(\RLNG(\clipfc)-\RLND(\clipfc)\right)\\
\equiv & A_n+B_n+C_n+D_n\,,
\end{split}
\end{align}
where
\begin{align*}
\RLNG(f)\equiv \ep_n L_G(Z,U,\delta,f(Z))\equiv
\ep_n\delta L(Z,Y(U),f(Z))/G(T|Z)\,,
\end{align*}
i.e., $\RLNG$ is the
empirical loss function with the true censoring
distribution function.

Using conditional expectation, we
obtain that for every $f\in H$,
\begin{align}\label{eq:conditional_expectation}
\begin{split}
\RLP(f)&\equiv P[L(Z,Y,f(Z))]=
P\left[P\left[\left.\frac{\delta}{G(T|Z)}L(Z,Y,f(Z))\right|Z,T\right]\right]
\\
& =P[L_G(Z,U,\delta,f(Z)]= \RLG(f)\,.
\end{split}
\end{align}
Therefore, we can rewrite the term $C_n$ as
\begin{align}\label{eq:C_n}
\begin{split}
  C_n\equiv& \RLNG(f_0) -\RLP(f_0)+\RLP(\clipfc)-\RLNG(\clipfc)\\
  =& \bigl(\RLNG(f_0) -\RLNG(\fps)\bigr)- \bigl(\RLG(f_0)-\RLG(\fps)\bigr)\\
  &+\bigl(\RLG(\clipfc) -\RLG(\fps)\bigr)- \bigl(\RLNG(\clipfc)-\RLNG(\fps)\bigr)\,,
  \end{split}
\end{align}
where $\fps$ is the Bayes decision function.

For every function $f\in H$, define the functions $h_f:\Z\times\T\mapsto \R$ as
\begin{align*}
  h_f(z,t)=L_G(z,t,f(z))-L_G(z,t,\fps(z))\,,
\end{align*}
for all $z,t\in\Z\times\T$. Using this notation, we can rewrite~\eqref{eq:C_n} as
\begin{align}\label{eq:C_n2}
  C_n\equiv (\ep_n-P) h_{f_0}+(P-\ep_n)h_{\clipfc}\,.
\end{align}
In order to bound $(\ep_n-P) h_{f_0}$ we follow the same arguments that lead to~\citetalias[Eq.~7.42]{SVR}, adapted to our setting. Write
\begin{align*}
  (\ep_n-P) h_{f_0}=(\ep_n-P)( h_{f_0}-h_{\clip{f}_0})+(\ep_n-P) h_{\clip{f}_0}.
\end{align*}
Since $L_G(z,t,f_0(z))-L_G(z,t,\clip{f}_0(z))\geq 0$, we obtain from the definition of $L_G$, \eqref{eq:sup_bound} and the bound on $f_0$ that $ h_{f_0}-h_{\clip{f}_0}\in [0,B_0/2K]$. It thus follows that
\begin{align*}
  P\left(( h_{f_0}-h_{\clip{f}_0})-P( h_{f_0}-h_{\clip{f}_0})\right)^2\leq P( h_{f_0}-h_{\clip{f}_0})^2\leq \frac{B_0}{2K}P( h_{f_0}-h_{\clip{f}_0})\,.
\end{align*}
Using Bernstein's inequality for the function $ h_{f_0}-h_{\clip{f}_0}-P( h_{f_0}-h_{\clip{f}_0})$, we obtain that with probability not less than $1-e^{-\eta}$,
\begin{align*}
  (\ep_n-P)( h_{f_0}-h_{\clip{f}_0})\leq \sqrt{\frac{\eta B_0 P( h_{f_0}-h_{\clip{f}_0})}{Kn}}+\frac{B_0\eta}{3Kn}\,.
\end{align*}
Using $\sqrt{ab}\leq \frac{a}{2}+\frac{b}{2}$, we obtain
\begin{align*}
  \sqrt{\frac{\eta B_0 P( h_{f_0}-h_{\clip{f}_0})}{Kn}}\leq P( h_{f_0}-h_{\clip{f}_0})+\frac{B_0\eta}{4Kn}\,,
\end{align*}
which leads to the bound
\begin{align}\label{eq:bound_h0_1}
  (\ep_n-P)( h_{f_0}-h_{\clip{f}_0})\leq P( h_{f_0}-h_{\clip{f}_0})+\frac{7B_0\eta}{12Kn}\,,
\end{align}
which holds with probability not less than $1-e^{-\eta}$.

Note that by the definition of $L_G$, we have
\begin{align}\label{eq:var_bound_LG}
\begin{split}
  Ph_{\clip{f}}^2&\equiv P\left(\frac{\delta}{G(T-)}(L(Z,Y,\clip{f}(Z))-L(Z,Y,\fps(Z)))\right)\\
  &=P\left(L(Z,Y,\clip{f}(Z))-L(Z,Y,\fps(Z))\right)\\
  &\leq V P\left(L(Z,Y,\clip{f}(Z))-L(Z,Y,\fps(Z))\right)^{\vartheta}=VPh_{\clip{f}}^{\vartheta},
  \end{split}
\end{align}
where we used~\eqref{eq:conditional_expectation} in the equalities and~\eqref{eq:var_bound} in the inequality. Let $\tV=\max\{V,(B/(2K))^{2-\vartheta}\}$.
It follows from the proof of~\citetalias{SVR}, Eq.~7.8, together with~\eqref{eq:var_bound_LG},
that with probability not less than $1-e^{-\eta}$
\begin{align}\label{eq:bound_h0_2}
   (\ep_n-P)h_{\clip{f}_0}\leq Ph_{\clip{f}_0}+ \left(\frac{2\tV \eta}{n}\right)^{\frac{1}{2-\vartheta}}+\frac{2B\eta}{3Kn}\,.
\end{align}
Summarizing, we obtain from~\eqref{eq:bound_h0_1} and~\eqref{eq:bound_h0_2} that
\begin{align}\label{eq:bound_h0_3}
  (\ep_n-P)h_{f_0}\leq Ph_{f_0}+ \left(\frac{2\tV \eta}{n}\right)^{\frac{1}{2-\vartheta}}+\frac{2B\eta}{3Kn}+\frac{7B_0\eta}{12Kn}\,.
\end{align}

We are now ready to bound the second term in~\eqref{eq:C_n2}. By Theorem~\ref{thm:SVR7}, with probability not less than $1-e^{\eta}$,  for all $n\geq 72\eta$,
\begin{align*}
  (P-\ep_n)h_{\clipfc}&<
 \frac{17}{27}\left(\lambda\|\fc\|^2_H+Ph_{\clipfc}+ \left(\frac{72\tV\eta}{n}\right)^{\frac{1}{2-\vartheta}}\right)+
  W\left(\frac{a^{2p}}{\lambda^p n}\right)^{\frac1{2-p-\vartheta+\vartheta p}},
\end{align*}
where $W>1$ is a constant that depends only on $p$, $M$, $\vartheta$, and $\tV$.

We would like to bound the expressions $B_n$ and $D_n$ of \eqref{eq:boundABCD}. Note that for any function $f$, we have
\begin{align}\label{eq:bound_for_risk_LGD}
\begin{split}
|\RLNG(\clip{f})-\RLND(\clip{f})|
&\equiv
\left|\ep_n\frac{\delta L(Z,Y,\clip{f}(Z))}{G(T|Z)}-\ep_n
\frac{\delta L(Z,Y,\clip{f}(Z))}{\hatG(T|Z)}\right|
\\
&=  \left|\ep_n\frac{\delta L(Z,Y,\clip{f}(Z))}{G(T|Z)\hatG(T|Z)}\left(
\hatG(T|Z)-G(T|Z)\right)\right|\\
&\leq \frac{B}{2K^2}\ep_n|(\hatG-G)(T|Z)|\,,
\end{split}
\end{align}
where the last inequality follows from
condition~(A\ref{as:positiveRisk}) and~\eqref{eq:sup_bound}.

Summarizing, we obtain that with probability not less than $1-3e^{-\eta}$
\begin{align*}
  \lambda\nhs{\fc}+&\RLP(\clipfc)-\RLPS\\
\leq\;\; & \lambda\nhs{f_0}+ Ph_{f_0}+ \left(\frac{2\tV \eta}{n}\right)^{\frac{1}{2-\vartheta}}+\frac{2B\eta}{3Kn}+\frac{7B_0\eta}{12Kn}+\frac{B}{2K^2}\ep_n|(\hatG-G)(T|Z)|\\
 &+   \frac{17}{27}\left(\lambda\|\fc\|^2_H+Ph_{\clipfc}+ \left(\frac{72\tV\eta}{n}\right)^{\frac{1}{2-\vartheta}}\right)+
  W\left(\frac{a^{2p}}{\lambda^p n}\right)^{\frac1{2-p-\vartheta+\vartheta p}}\,.
\end{align*}
Note that by conditional expectation~\eqref{eq:conditional_expectation}, $RLP(\clipfc)-\RLPS=Ph_{\clipfc}$. Since $\tV>(B/2K)^{2-\vartheta}$ and $n\geq72\eta$,
\begin{align*}
  \frac{2B\eta}{3Kn}\leq \frac{4}{3}\cdot\frac{B}{2K}
  \cdot\frac{1}{72}\cdot\frac{72\eta}{n}\leq \frac{1}{54}\tV^{1/(2-\vartheta)}\left(\frac{72\eta}{n}\right)^{1/(2-\vartheta)}\,.
\end{align*}
Hence, using the fact that $6\leq 36^{1/(2-\vartheta)}$, and some algebraic transformations, we obtain
\begin{align*}
  &\lambda\nhs{\fc}+\RLP(\clipfc)-\RLPS
  \\
  &\leq \frac{27}{10}\left(
   \lambda\nhs{f_0}+Ph_{f_0}+\frac{36^{\frac{1}{2-\vartheta}}}{6}\left(\frac{2\tV \eta}{n}\right)^{\frac{1}{2-\vartheta}}+\frac{1}{54}\left(\frac{72\tV\eta}{n}\right)^{1/(2-\vartheta)}\right.\\
&\left.\quad+ \frac{7B_0\eta}{12Kn}+ 2\cdot\frac{B}{2K^2}\ep_n|(\hatG-G)(T|Z)|+\frac{17}{27}\left(\frac{72\tV\eta}{n}\right)^{\frac{1}{2-\vartheta}}+ W\left(\frac{a^{2p}}{\lambda^p n}\right)^{\frac1{2-p-\vartheta+\vartheta p}}\right)  \\
&\leq \frac{27}{10} ( \lambda\nhs{f_0}+Ph_{f_0})+\frac{27}{10}\frac{22}{27}\left(\frac{72\tV\eta}{n}\right)^{1/(2-\vartheta)}\\
&\quad+ \frac{27}{10}\frac{7B_0\eta}{12Kn}+W\left(\frac{a^{2p}}{\lambda^p n}\right)^{\frac1{2-p-\vartheta+\vartheta p}}+ \frac{27}{10}\frac{B}{K^2}\ep_n|(\hatG-G)(T|Z)|
\\
&\leq 3( \lambda\nhs{f_0}+Ph_{f_0})+3\left(\frac{72\tV\eta}{n}\right)^{1/(2-\vartheta)}
+\frac{8B_0\eta}{5Kn}+\frac{3B}{K^2}\ep_n|(\hatG-G)(T|Z)|+W\left(\frac{a^{2p}}{\lambda^p n}\right)^{\frac1{2-p-\vartheta+\vartheta p}}\,.
\end{align*}

Until now we assumed that $n\geq 72\eta$. Assume now that $n<72\eta$. By substituting the bounds~\eqref{eq:C_n2},~\eqref{eq:bound_h0_3} and~\eqref{eq:bound_for_risk_LGD} in~\eqref{eq:boundABCD}, we obtain the following bound, that holds with probability not less than $1-2e^{-\eta}$, and where we did not use any assumption on the relation between $n$ and $\eta$:
\begin{align*}
  \lambda\nhs{\fc}+\RLP(\clipfc)-\RLPS
 \leq&\left(\lambda\nhs{f_0}+\RLP(f_0) -\RLPS\right) +\frac{B}{K^2}\ep_n|(\hatG-G)(T|Z)|\\
 &+ \left(\frac{2\tV \eta}{n}\right)^{\frac{1}{2-\vartheta}}+\frac{2B\eta}{3Kn}+\frac{7B_0\eta}{12Kn}+(P-\ep_n)h_{\clipfc}\,.
\end{align*}
By the definition of $h_{\clipfc}$, we obtain that $(P-\ep_n)h_{\clipfc}\leq B/K$. Using the fact that $B/2K\leq \tV^{1/(2-\vartheta)}$, we obtain that
\begin{align*}
(P-\ep_n)h_{\clipfc}\leq 2\left(\frac{72\tV\eta}{n}\right)^{1/(2-\vartheta)}  \end{align*}
and thus the result follows also for the case $n<72\eta$.
\end{proof}

\subsection{Additional Proofs}\label{sec:additional_proofs}
\begin{proof}[Proof of Lemma~\ref{lem:clipping}]
Define
\begin{align*}
    g(z,y)=\inf_s L(z,y,s)\,,\qquad\qquad (z,y,s)\in\Z\times\Y\times\R\,.
\end{align*}
By assumption, for every $(z,y)\in\Z\times\Y$, $\inf_s L(z,y,s)$ is obtained at some point $s=s(z,y)$. Moreover, since $L$ is strictly convex, $s_0$ is uniquely defined.

We now show that $g(z,y)$ is continuous at a general point $(z_0,y_0)$. Let $\{(z_n,y_n)\}$ be any sequence that converges to $(z_0,y_0)$. Let $m_n=g(z_n,y_n)=L(z_n,y_n,s_n)$, and assume by contradiction that $m_n$ does not converge to $m_0=g(z_0,y_0)$. Since $g(z,y)$ is bounded from above by $\max_{z,y}L(z,y,0) $ and $\Z\times\Y$ is compact, there is a subsequence $\{m_{n_k}\}$ that converges to some $m^*\neq m_0$. By the continuity of $L$, there is a further subsequence $\{s_{n_{k_l}}\}\in (-\infty,\infty)$ such that $L(z_{n_{k_l}},y_{n_{k_l}},s_{n_{k_l}})= m_{n_{k_l}}$ and $\{s_{n_{k_l}}\}$ converges to $s^*\in [-\infty,\infty]$. If $s^*\in (-\infty,\infty)$, then by definition $m_0=\inf_s L(z_0,y_0,s)< L(z_0,y_0,s^*)=m^*$, and hence from the continuity of $L$ for all $n$ large enough $L(z_{n_{k_l}},y_{n_{k_l}},s_0)< L(z_{n_{k_l}},y_{n_{k_l}},s_{n_{k_l}})$, and we arrive at a contradiction.

Assume now that $s^*\notin(-\infty,\infty)$, and without loss of generality, let $s^*=\infty$. Note that $\max_{z,y}g(z,y)$ is bounded from above by $M_0=\max_{z,y}L(z,y,0) $. Chose $s_M>s_0$ such that for all $s>s_M$, $L(z_0,y_0,s)>3M_0$. By the continuity of $L$, there is an $\eps>0$ such that for all $(z,y)\in B_{\eps}(z_0,y_0)\cap \Z\times\Y$, $L(z,y,s_M)>2M_0$, and note that $L(z,y,s_0)<2M_0$. Recall that $L$ is strictly convex in the last variable, and hence it must be increasing at $s_M$ for all points $(z,y)\in B_{\eps}(z_0,y_0)\cap \Z\times\Y$ \citep[see for example][Proposition~1.3.5]{Niculescu2006Convex}. Consequently, for all $n$ big enough, $ L(z_{n_{k_l}},y_{n_{k_l}},s_{n_{k_l}})>2M_0$, and we again arrive at a contradiction, since $m_{n_{k_l}}<M_0$.

We now show that $s(z,y)=\argmin_s L(z,y,s)$ is continuous at a general point $(z_0,y_0)$. Let $\{(z_n,y_n)\}$ be a sequence that converges to $(z_0,y_0)$. Let $s_n=s(z_n,y_n)$. Assume, by contradiction, that $s_n$ does not converge to $s_0$. Hence, there is a subsequence $\{s_{n_k}\}$ that converges to some $s^*\in(-\infty,\infty)$ ($s^*\in\{-\infty,\infty\}$ cannot happen, see above). Hence, $\lim L(z_{n_k},y_{n_k},s_{n_k})=\lim m_{n_k}=m_0$, and  $L(z_0,y_0,s_0)=L(z_0,y_0,s^*)=m_0$, which contradicts the fact that $L(z_0,y_0,\cdot)$ is strictly convex and therefore has a unique minimizer.

Since $s(z,y)$ is continuous on a compact set, there is an $M$ such that $|s(z,y)|<M$. It then follows from Lemma~2.23 of \citetalias{SVR} that $L$ can be clipped at $M$.
\end{proof}
%
%
\begin{supplement}[id=sec:suppA]
  \sname{Supplement A}
  \stitle{Matlab Code}
\slink[url]{http://stat.haifa.ac.il/~ygoldberg/research}
  \sdescription{Please read
the file README.pdf for details on the files in this
folder.}
\end{supplement}

\bibliographystyle{plainnat}

\end{document}